\def\OpenSourceVersion{1}
\documentclass[letterpaper]{article} 
\ifdefined\OpenSourceVersion
\usepackage[preprint]{aaai2027}  
\copyrighttext{Preprint}
\else
\usepackage[submission]{aaai2027}  
\fi
\usepackage[hyphens]{url}  
\usepackage{graphicx} 
\usepackage{natbib}  
\usepackage{caption} 
\usepackage{algorithm}
\usepackage{algpseudocode}

\usepackage{newfloat}
\usepackage{listings}
\DeclareCaptionStyle{ruled}{labelfont=normalfont,labelsep=colon,strut=off} 
\floatstyle{ruled}
\newfloat{listing}{tb}{lst}{}
\floatname{listing}{Listing}

\usepackage{booktabs}
\usepackage{multirow}
\usepackage{amsmath,amssymb,amsthm}
\usepackage{xr}
\ifdefined\OpenSourceVersion
\else
\fi

\newtheorem{proposition}{Proposition}
\newtheorem{lemma}[proposition]{Lemma}

\newcommand{\method}{R$^2$-OPD}
\newcommand{\methodname}{Reasoning-Progress-Aware Reward Filtering for On-Policy Distillation}

\title{Beyond Imitation: Filtering On-Policy Distillation by Reasoning Progress}

\ifdefined\OpenSourceVersion
\author{
    Chen Yang\textsuperscript{\rm 1},
    Haiyuan Wan\textsuperscript{\rm 2},
    Rengrong Xiong\textsuperscript{\rm 3},
    Yize Chen\textsuperscript{\rm 4}\corresponding,
    Danny H.K. Tsang \textsuperscript{\rm 1}
}

\affiliations{
    \textsuperscript{\rm 1}The Hong Kong University of Science and Technology (Guangzhou)\\
    \textsuperscript{\rm 2}Tsinghua University\\
    \textsuperscript{\rm 3}Zhejiang University\\
    \textsuperscript{\rm 4}University of Alberta\\
    cyang529@connect.hkust-gz.edu.cn, yize9@ualberta.ca\\
}
\else
\author{Anonymous Submission}
\affiliations{Anonymous Institution}
\fi

\begin{document}
\maketitle

\begin{abstract}
On-policy distillation (OPD) has emerged as an effective framework for post-training language models by pairing student-generated trajectories with dense token-level supervision from a teacher. However, OPD implicitly assumes that teacher-derived rewards are an appropriate proxy for reasoning progress, and therefore treats all teacher feedback equally during policy optimization. While in practice, this assumption does not always hold. We observe that teacher-derived rewards often conflict with genuine reasoning progress, as reasoning steps with clear reasoning advancement may still receive lower distillation rewards, simply due to deviation from teacher's outputs. To address this mismatch, we propose {\methodname{} (\method{})}, which constructs two within-trajectory rankings of reasoning spans, one from teacher-derived rewards and the other from independently estimated progress reward. Distillation rewards are selectively suppressed whenever the two rankings disagree, reducing supervision that conflicts with reasoning progress while preserving effective teacher guidance. Our approach shows consistent improvement over standard OPD especially regarding reasoning performances.
\end{abstract}


\section{Introduction}
Knowledge distillation transfers the predictive behavior of a capable teacher model to a naive student model, offering a practical route to improving language models without the full expense of training or deploying the teacher~\cite{hinton2015distilling,kim2016sequence}. For autoregressive generation, conventional distillation is commonly performed on static teacher-generated datasets, leading to an exposure bias between the training trajectories and inference trajectories ~\cite{bengio2015scheduled}. On-policy distillation (OPD) addresses this mismatch by taking student-generated responses as training inputs and querying the teacher at the specific states the student encounters. Combined with token-level distribution matching and policy optimization, OPD provides dense supervision across the entire response, serving as a promising paradigm for post-training reasoning models and model merging \cite{song2026survey, xu2026deepseek}.

Despite these advantages, token-level teacher supervision does not directly \emph{measure whether a reasoning span advances the solution.} At every decoding step, OPD rewards the student for staying close to the teacher distribution, implicitly treating teacher similarity as a proxy for reasoning and answer  quality~\cite{heo2026policy}. This proxy can be unreliable because a student-generated span may increase the probability of arriving at the correct answer even if it departs from the teacher trajectory \cite{jiang2026trajectoryrefineddistillation}. Conversely, a teacher-like span does not necessarily make meaningful, concise progress toward the ultimate solution. Consequently, a constructive reasoning step may receive unfavorable distillation guidance simply for deviating from the teacher. Applying such uncalibrated signals uniformly risks suppressing valid reasoning pathways rather than effectively transferring functional knowledge~\cite{plyusov2026trust}.

This observation motivates a distinction between \emph{distillation compatibility} and \emph{reasoning progress}. While the former measures agreement with the teacher, the latter measures how an intermediate state changes the likelihood of solving the task. Process rewards provide a natural teacher-independent surrogate for reasoning progress by sampling continuations from successive reasoning states and comparing their solve probabilities to estimate the marginal contribution of each span \cite{jia2025we,lightman2024let,Uesato2022SolvingMW}. However, directly combining raw process rewards with token-level divergence can be ineffective, as these two signals operate on different scales and exhibit fine-grained noise. The former inherits Monte Carlo noise from a finite number of continuations, while the latter can fluctuate with local lexical choices and teacher uncertainty.

In this work, we propose \textbf{\methodname{} (\method{})}, a novel reward-filtering framework for reasoning-oriented OPD. Across consecutive reasoning spans, \method{} first merges adjacent reasoning spans with sign-consistent process rewards. Such a procedure removes dependence on noisy internal boundary estimates and produces more stable units for reward calibration. It then aggregates token-level divergence within each merged span, and compares the relative ordering induced by reasoning progress with that induced by teacher divergence. When a higher-progress span is penalized more strongly for departing from the teacher, the two signals exhibit a local ranking conflict. Rather than replacing the OPD reward or adding process reward as a new optimization objective, \method{} utilizes this conflict only as a reliability test and masks the distillation rewards of the most inconsistent spans. In this way, the method preserves teacher guidance where it agrees with reasoning progress, while reducing supervision that may discourage productive reasoning. This approach not only makes teacher guidance more effective, but also significantly boosts OPD performance in complex reasoning scenarios. Our contributions are as follows:
\begin{itemize}
    \item We identify a common failure mode of reasoning-oriented OPD and show that uniformly applied distillation rewards may provide misleading supervision. By introducing \method{}, it adopts independently estimated process rewards to construct a teacher-independent reference, detects local progress--distillation ranking conflicts, and selectively masks unreliable segment-level supervision. 
    \item We develop sign-consistent process-reward merging and segment-level divergence averaging, and provide theoretical results characterizing the cancellation of internal boundary-estimation errors and the reduction of local divergence variance under weak dependence.
    \item We empirically demonstrate that filtering progress-conflicting distillation signals consistently improves reasoning performance over standard OPD.
\end{itemize}

\section{Preliminaries}
\label{sec:prelim}
\subsection{On-Policy Distillation}
Let $x$ and $y=(y_1,\dots,y_T)\sim\pi_S(\cdot\mid x)$ denote a prompt and a response sampled on-policy from the student model, respectively. Let $h_t=(x,y_{<t})$ denote the decoding context at step $t$. 
In this work, reverse Kullback--Leibler (KL) divergence-based On-Policy Distillation (OPD) is taken into investigation. Specifically, OPD optimizes the student over its self-generated trajectories by minimizing the reverse KL divergence between the student and teacher token distributions at each decoding step \cite{ICLR2024_5be69a58,gu2024minillm}. 
Given teacher policy $\pi_T$, evaluating this reverse KL divergence over the full vocabulary effectively weighs each token by student policy $\pi_S$, leading to the following objective:
\begin{small}
\begin{equation}
\begin{aligned}
\mathcal{L}_{\mathrm{OPD}}
&=
\mathbb{E}_{\substack{x, y\sim\pi_S(\cdot\mid x)}}
\left[
\frac{1}{T}\sum_{t=1}^{T}\ell_t^{\mathrm{KL}}
\right], \\
\ell_t^{\mathrm{KL}}
&=
D_{\mathrm{KL}}
\!\left(
\pi_S(\cdot\mid h_t)
\,\|\,\pi_T(\cdot\mid h_t)
\right) \\ &=
\sum_{v\in\mathcal{V}}
\pi_S(v\mid h_t)
\log
\frac{\pi_S(v\mid h_t)}
     {\pi_T(v\mid h_t)} .
\end{aligned}
\label{eq:opd-objective}
\end{equation}
\end{small}
Directly computing the KL divergence over the entire vocabulary $\mathcal{V}$ at every step is computationally prohibitive, and extremely low-probability tokens offer negligible supervisory signals. Consequently, practical OPD implementations approximate Eq.~\eqref{eq:opd-objective} using a support set $\mathcal{S}_t\subseteq\mathcal{V}$ of size $H$. The support can be constructed from the $H$ highest-probability tokens of the student, the teacher, or their union.

For the student-derived support used in our implementation, we define the normalized student weight
\begin{small}
\begin{equation}
w_{t,v}
=
\frac{\pi_S(v\mid h_t)}
{\sum_{u\in\mathcal{S}_t}\pi_S(u\mid h_t)},
\qquad v\in\mathcal{S}_t,
\label{eq:support-weight}
\end{equation}
\end{small}
And then we compute the support-restricted approximation to the token-level KL loss as
\begin{small}
\begin{equation}
\ell_t^{\mathrm{KL},\mathcal{S}}
=
\sum_{v\in\mathcal{S}_t}
w_{t,v}
\log
\frac{\pi_S(v\mid h_t)}
     {\pi_T(v\mid h_t)}.
\label{eq:support-loss}
\end{equation}
\end{small}

\subsection{Process Reward Estimation}
\label{sec:process-reward}

\begin{figure}
    \centering
    \includegraphics[width=\linewidth]{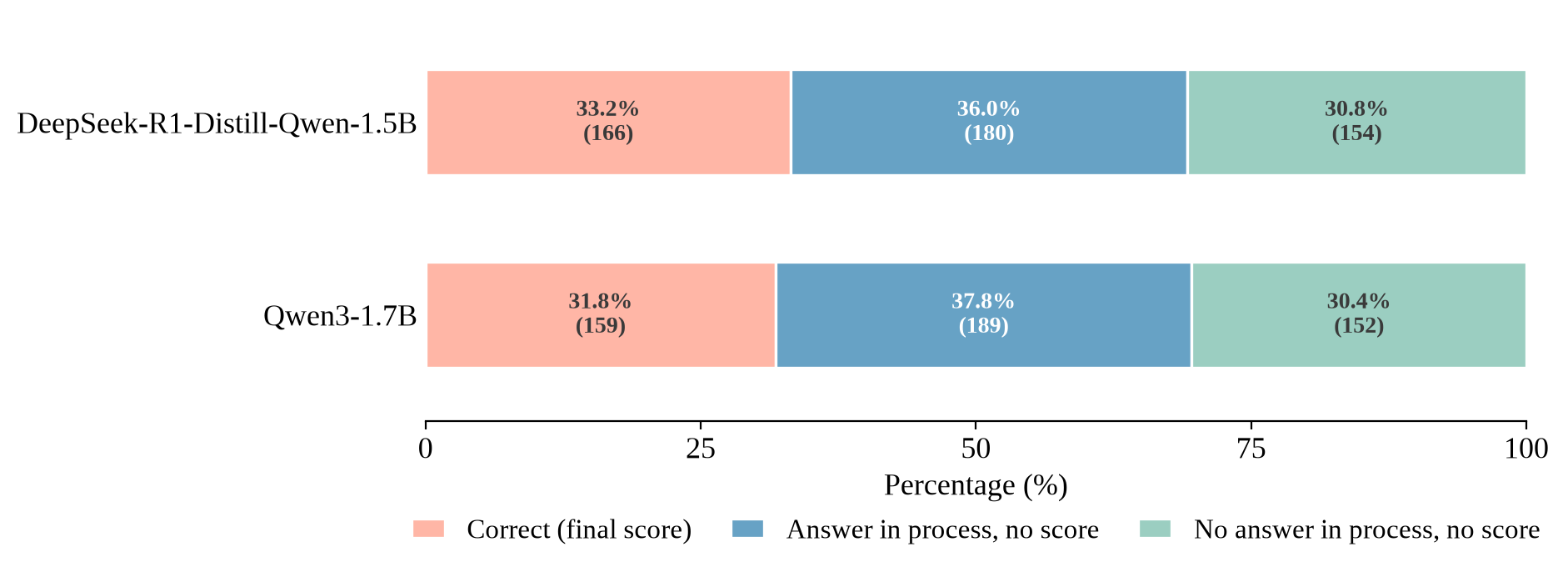}
    \caption{Distribution of evaluation outcomes for DeepSeek-R1-Distill-Qwen-1.5B and Qwen3-1.7B on DAPO (500 samples, max 7,168 response tokens).}
    \label{fig:barfigure}
\end{figure}
While the token-level supervision introduced above measures distributional agreement with the teacher, it fails to explicitly reflect the correctness of the final outcome. To bridge this gap, recent OPD methods for reasoning tasks have explored the incorporation of outcome-level correctness to calibrate or filter token-level teacher supervision~\cite{hou2026uni,akhondzadeh2026reward,zheng2026scope}.
However, outcome feedback indicates only whether a completed response is correct and provides limited information about which intermediate steps advance the solution, introduce an error, or recover from an earlier mistake. Furthermore, as illustrated in Figure \ref{fig:barfigure}, a third of the responses contain correct reasoning paths, but are truncated due to overthinking. Relying solely on outcome-based filtering thus fails to fully exploit such valuable trajectories.
Process rewards address this temporal credit-assignment problem by evaluating intermediate reasoning states or transitions, thereby providing a finer-grained account of how the likelihood of task success evolves along a trajectory \cite{lightman2024let,Uesato2022SolvingMW}.

This form of supervision has become useful across broader post-training workflows~\cite{cui2025process}. The process reward can guide reinforcement learning by assigning credit to individual reasoning steps, support rejection sampling or reranking among candidate solutions, and steer test-time search toward promising partial trajectories~\cite{Luo2024ImproveMR,zhang2024rest}. When manual step-level annotation is unavailable, prior work has constructed automatic process supervision from the success rate of sampled continuations, grounding the value of an intermediate state in its probability of eventually reaching a correct answer \cite{wang2024math,qu2025optimizing,setlur2025rewarding}. 

Following this intuition, we estimate the solve probability of intermediate reasoning states through on-policy rollouts, and design process rewards as the incremental change in solve probability across consecutive reasoning states.
Given a response $y = (y_1, \dots, y_T)$, we partition it into $M$ contiguous reasoning segments $\Sigma(y) = \{\sigma_1, \dots, \sigma_M\}$. 
Segment boundaries are determined in two stages. We first identify candidate boundary positions in $y$ by matching its tokens against a fixed lexicon $\mathcal{T}$ of self-reflective discourse markers. To prevent over-segmentation caused by consecutive discourse markers with minimal reasoning content, we accept a candidate boundary $b_{m-1}$ only when $N_{\mathrm{sent}}(y_{b_{m-1}+1:t}) \ge S_{\min}$. Here, $N_{\mathrm{sent}}(\cdot)$ measures the sentence count, and $S_{\min}$ specifies the minimum number of sentences required between successive accepted boundaries. The complete lexicon and matching procedure are provided in Appendix~\ref{app:segmentation}.

To quantify the reasoning progress at boundary state $p_m = (x, y_{1:b_m})$, we estimate its solve probability via $N_{\mathrm{eval}}$ on-policy Monte Carlo rollouts:
\begin{equation}
c_m^{(1)}, \dots, c_m^{(N_{\mathrm{eval}})} \;\sim\; \pi_S(\cdot \mid \bar{p}_m, \mathcal{I}_{\text{ans}}),
\end{equation}
where $\bar{p}_m$ is the format-standardized prefix and $\mathcal{I}_{\text{ans}}$ is an answer-eliciting prompt (Appendix~\ref{app:answer-instruction}).
For each prefix state  $p_m$, the corresponding solve probability is estimated by Monte Carlo sampling using the ground-truth answer $g$ and verifiable reward function $\mathcal{R}(\cdot, \cdot)$,
\begin{small}
    \begin{equation}
\hat{S}_m = 
\begin{cases} 
0, & m = 0, \\[2pt]
\frac{1}{N_{\mathrm{eval}}}\sum_{\ell=1}^{N_{\mathrm{eval}}} \mathcal{R}\big(p_m \mathbin{\Vert} c_m^{(\ell)},\, g\big), & 1 \le m \le M-1, \\[2pt]
\mathcal{R}(y, g), & m = M.
\end{cases}
\label{eq:solve-prob}
\end{equation}
\end{small}
Eq.~\eqref{eq:solve-prob} assigns zero credit to the initial state, and reuses the actual response's correctness for $m=M$, avoiding unnecessary generation passes for the terminal state.

The \textbf{process reward} $PR_m$ for segment $\sigma_m$ is defined as the marginal gain in solve probability across its boundary:
\begin{equation}
PR_m \;\triangleq\; \hat{S}_m - \hat{S}_{m-1}, \qquad m = 1,\dots,M.
\label{eq:process-reward}
\end{equation}
To avoid redundant computation, $PR_m$ is evaluated only when segmentation succeeds and a string-level check confirms $g \in y$. Otherwise, the trajectory is tagged as \emph{PR-unavailable} and passed through unfiltered. This procedure ensures \method{} never penalizes trajectories which lack explicit process signals.

\section{Reasoning-Progress-Aware Reward Filtering}
The token-level OPD reward measures how closely the student matches the teacher at each decoding state, but it does not indicate whether a reasoning span actually improves the student's ability to solve the problem. Conversely, terminal correctness provides only trajectory-level supervision and cannot attribute success or failure to individual reasoning spans. As a result, a reasoning span that genuinely increases the student's probability of solving the problem may still receive a low distillation reward simply because its token distribution differs from that of the teacher. To address this limitation, we use the process reward defined in Eq.~\eqref{eq:process-reward} as an independent reference when evaluating the reliability of the OPD supervision for each reasoning span. Importantly, the process reward neither replaces the OPD reward nor serves as an additional optimization objective. Instead, it provides a teacher-independent estimate of reasoning progress, allowing us to assess whether the distillation signal is aligned with the estimated contribution of each reasoning span. Our method consists of three stages, which are described in detail in the following subsections. This algorithm is illustrated in Figure \ref{fig:alg}. For clarity, we describe the segment-level operations for a single response and omit the response index unless multiple responses must be distinguished.

\paragraph{Noise Reduction via Sign-Consistent merging.}
\begin{figure}
    \centering
    \includegraphics[width=\linewidth]{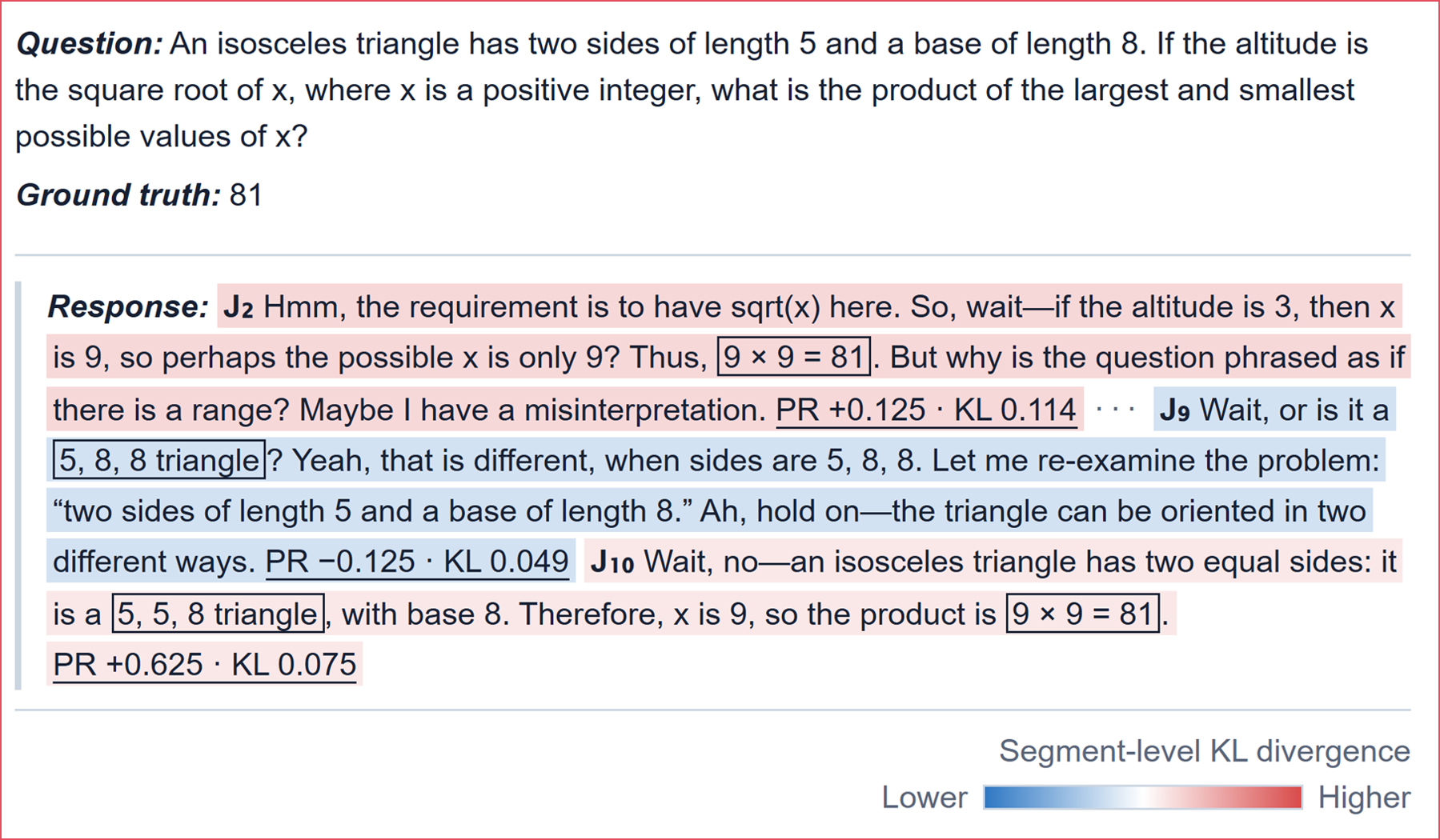}
    \caption{representative mismatch between reasoning progress and the OPD divergence signal. A reasoning regressive segment ($J_9$) receives a smaller KL divergence under Equation~\eqref{eq:segment-loss} than segments that derive or recover the correct solution, respectively($J_2$ and $J_{10}$).}
    \label{fig:voilation}
\end{figure}

In practice, both process rewards and token-level distillation signals can be very noisy at a fine granularity, posing challenges for policy optimization. Process rewards are estimated from a finite number of continuations, while token-level divergences can fluctuate substantially because of local lexical choices, teacher uncertainty, and the top-$k$ approximation. Directly comparing these signals over short reasoning spans may therefore frequently produce unstable and even conflicting guidance. To overcome this challenge, \method{} first aggregates adjacent spans that exhibit a consistent direction of estimated progress, obtaining coarser units on which both signals can be evaluated more reliably.

Specifically, because estimation with a small $N_{\mathrm{eval}}$ renders individual $PR_m$ noisy, we merge adjacent segments whose nonzero process rewards have the same sign. A zero-valued process reward is absorbed into the current run and does not create a new boundary; a new run begins only when the nonzero sign changes from positive to negative or vice versa. This partitions $\{1,\dots,M\}$ into $n$ maximal contiguous index sets $\mathcal{J}_1, \dots, \mathcal{J}_n$. Each index set $\mathcal{J}_j$ induces a merged segment $\tilde{\sigma}_j$ by concatenating its constituent segments, with aggregated process reward:
\begin{equation}
\widetilde{PR}_j \;\triangleq\; \sum_{m \in \mathcal{J}_j} PR_m, \qquad j = 1,\dots,n.
\label{eq:merged-reward}
\end{equation}
Because process rewards are defined as differences between consecutive solve-probability estimates, summing them over a merged run produces a telescoping sum. Consequently, the aggregated reward depends only on the estimates at the two endpoints rather than on those at the internal boundaries. We formalize this property below.

\begin{figure*}
    \centering
    \includegraphics[width=0.9\linewidth]{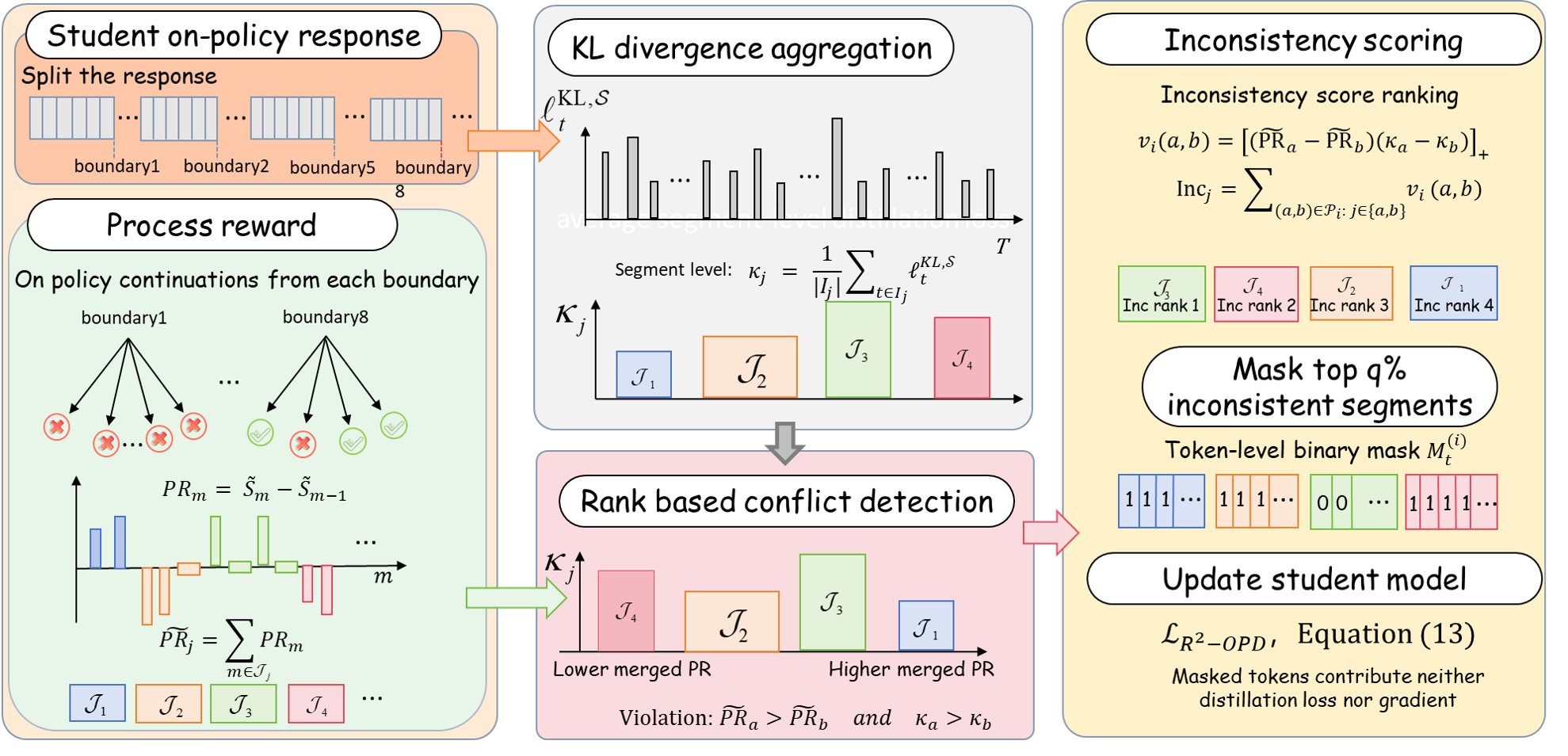}
    \caption{Overview of the proposed Reasoning-Progress-Aware Reward Filtering OPD. Adjacent reasoning spans with consistent progress signals are first merged. For each merged segment, we sum its process rewards and average its token-level KL divergences. Rank conflicts between the resulting segment-level signals are then used to identify unreliable spans, which are masked during the OPD policy update.}
    \label{fig:alg}
\end{figure*}

\begin{lemma}[Telescoping of Aggregated Process Rewards]
\label{lem:boundary-cancellation}
Let $\mathcal{J}_j=\{a,\dots,b\}$ be a merged run, and write each estimated solve probability as $\hat{S}_m=S_m+\epsilon_m$, where $S_m$ is the underlying solve probability and $\epsilon_m$ is its Monte Carlo estimation error. Then
\begin{equation}
\widetilde{PR}_j
=
(S_b-S_{a-1})+(\epsilon_b-\epsilon_{a-1}).
\label{eq:telescoping-pr}
\end{equation}
Hence, the aggregated reward is independent of the internal boundary errors $\epsilon_a,\dots,\epsilon_{b-1}$ and retains only the errors at the two endpoints.
\end{lemma}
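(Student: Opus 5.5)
The plan is a direct telescoping computation; Lemma~\ref{lem:boundary-cancellation} needs nothing beyond the definitions in Eqs.~\eqref{eq:process-reward} and~\eqref{eq:merged-reward}. Since the merging procedure partitions $\{1,\dots,M\}$ into maximal \emph{contiguous} index sets, we may write $\mathcal{J}_j=\{a,a+1,\dots,b\}$ with $1\le a\le b\le M$. Substituting $PR_m=\hat{S}_m-\hat{S}_{m-1}$ into Eq.~\eqref{eq:merged-reward} gives
\begin{equation*}
\widetilde{PR}_j=\sum_{m=a}^{b}\bigl(\hat{S}_m-\hat{S}_{m-1}\bigr)=\hat{S}_b-\hat{S}_{a-1}.
\end{equation*}
This holds because each intermediate $\hat{S}_m$ with $a\le m\le b-1$ appears once with a plus sign (from the $m$-th term) and once with a minus sign (from the $(m+1)$-th term). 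The cancellation can be made formal by reindexing the second sum, or by a short induction on $b-a$.

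Next, I substitute the decomposition $\hat{S}_m=S_m+\epsilon_m$ at the two surviving indices $b$ and $a-1$ and regroup. This yields Eq.~\eqref{eq:telescoping-pr}: the true progress $S_b-S_{a-1}$ plus the error term $\epsilon_b-\epsilon_{a-1}$. The independence claim follows because the right-hand side involves only $\hat{S}_b$ and $\hat{S}_{a-1}$. The internal estimates $\hat{S}_a,\dots,\hat{S}_{b-1}$, and hence $\epsilon_a,\dots,\epsilon_{b-1}$, appear nowhere in the expression. Equivalently, $\partial\widetilde{PR}_j/\partial\epsilon_m=0$ for $a\le m\le b-1$.

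There is no real obstacle; the only care needed is at the endpoint conventions of Eq.~\eqref{eq:solve-prob}. When $a=1$, the index $a-1=0$ gives $\hat{S}_0=0$ deterministically, so I set $S_0=0$ and $\epsilon_0=0$. When $b=M$, $\hat{S}_M=\mathcal{R}(y,g)$ is the realized outcome rather than a Monte Carlo average. In that case $\epsilon_M$ should be read as the deviation of this single realization from $S_M$, or as zero if $S_M$ is defined as the realized outcome. Either reading keeps the identity exact, since the identity is purely algebraic and does not depend on any distributional properties of the $\epsilon_m$.

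Finally, the degenerate case $a=b$ reduces to the definition $PR_a=\hat{S}_a-\hat{S}_{a-1}$. Here there are no internal boundaries, so the independence claim holds vacuously.
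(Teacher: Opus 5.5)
Your proposal is correct and matches the paper's proof: substitute $PR_m=\hat{S}_m-\hat{S}_{m-1}$, telescope the sum over the contiguous run to $\hat{S}_b-\hat{S}_{a-1}$, and then insert $\hat{S}_m=S_m+\epsilon_m$ at the two surviving endpoints. Your remarks on the endpoint conventions ($\hat{S}_0=0$ and $\hat{S}_M=\mathcal{R}(y,g)$) and on the degenerate case $a=b$ are sound refinements that the paper leaves implicit.
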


\begin{proof}
Substituting $PR_m=\hat{S}_m-\hat{S}_{m-1}$ into Eq.~\eqref{eq:merged-reward} gives
\[
\sum_{m=a}^{b}PR_m
=
\sum_{m=a}^{b}(\hat{S}_m-\hat{S}_{m-1})
=
\hat{S}_b-\hat{S}_{a-1},
\]
because all internal terms telescope. Substituting $\hat{S}_m=S_m+\epsilon_m$ yields Eq.~\eqref{eq:telescoping-pr}.
\end{proof}

Restricting merging to sign-consistent runs avoids canceling progress signals that reflect genuine changes in reasoning direction. The resulting segments therefore capture progress across major turning points while eliminating dependence on noisy internal boundary estimates.

\subsection{Rank-Based Conflict Detection and Masking}
\label{sec:conflict-masking}
For each merged segment $j \ge 1$, let $\mathcal{I}_j \subseteq \{1,\dots,T\}$ denote the token positions belonging to $\tilde{\sigma}_j$. In order to compare token-level distillation feedback with segment-level process rewards at a common granularity while avoiding a systematic dependence on segment length, we average the token-level signals within each merged segment. We refer to the resulting mean as the average segment-level distillation loss $\kappa_j$:
\begin{equation}
\kappa_j
\;=\;
\frac{1}{|\mathcal{I}_j|}
\sum_{t\in\mathcal{I}_j} \ell_t^{\mathrm{KL},\mathcal{S}},
\label{eq:segment-loss}
\end{equation}
where a larger $\kappa_j$ indicates stronger disagreement with the teacher. Here, token distance refers to the difference $|t-s|$ between two token positions in the response sequence.

\begin{proposition}[Variance Reduction by Segment Averaging]
\label{prop:segment-averaging}
Let $L_j=|\mathcal{I}_j|$, and suppose that within merged segment $j$ the support-restricted token-level losses satisfy
\[
\ell_t^{\mathrm{KL},\mathcal{S}}=\mu_j+\epsilon_t,
\qquad t\in\mathcal{I}_j,
\]
where $\mu_j$ is locally constant, $\mathbb{E}[\epsilon_t]=0$, and
\[
\left|\operatorname{Cov}(\epsilon_t,\epsilon_s)\right|
\le
\sigma^2\rho^{|t-s|}
\qquad
\text{for some }0\le\rho<1.
\]
Then $\kappa_j$ is an unbiased estimator of $\mu_j$, and
\begin{equation}
\operatorname{Var}(\kappa_j)
\le
\frac{\sigma^2}{L_j}\frac{1+\rho}{1-\rho}.
\label{eq:segment-variance-bound}
\end{equation}
Thus, provided that $\sigma^2$ and $\rho$ are bounded independently of $L_j$, its variance decreases as $O(1/L_j)$.
\end{proposition}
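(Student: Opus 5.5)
The plan is to prove unbiasedness by linearity and then bound the variance by expanding it into a double sum of covariances, controlling that sum with a geometric series.

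\emph{Unbiasedness.} Substitute the decomposition $\ell_t^{\mathrm{KL},\mathcal{S}}=\mu_j+\epsilon_t$ into Eq.~\eqref{eq:segment-loss}. This gives $\kappa_j=\mu_j+\frac{1}{L_j}\sum_{t\in\mathcal{I}_j}\epsilon_t$. Since $\mu_j$ is treated as a constant on the segment and $\mathbb{E}[\epsilon_t]=0$, linearity of expectation yields $\mathbb{E}[\kappa_j]=\mu_j$.

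\emph{Variance expansion.} The constant $\mu_j$ does not contribute to the variance, so
\[
\operatorname{Var}(\kappa_j)=\frac{1}{L_j^2}\sum_{t\in\mathcal{I}_j}\sum_{s\in\mathcal{I}_j}\operatorname{Cov}(\epsilon_t,\epsilon_s)
\le\frac{\sigma^2}{L_j^2}\sum_{t\in\mathcal{I}_j}\sum_{s\in\mathcal{I}_j}\rho^{|t-s|}.
\]
Here I use $\operatorname{Var}(X)\le|\operatorname{Var}(X)|$ and the triangle inequality on the double sum before invoking the assumed covariance bound.

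\emph{Geometric bound.} $\mathcal{I}_j$ is a set of $L_j$ token positions. A merged segment is a concatenation of contiguous segments, so these positions are consecutive; the argument below in fact holds for any set of distinct integers. Fix $t$. For each $d\ge1$ there are at most two $s\in\mathcal{I}_j$ with $|t-s|=d$. Hence the inner sum satisfies
\[
\sum_{s\in\mathcal{I}_j}\rho^{|t-s|}\le 1+2\sum_{d\ge1}\rho^d=1+\frac{2\rho}{1-\rho}=\frac{1+\rho}{1-\rho}.
\]
Summing this over the $L_j$ values of $t$ gives $\operatorname{Var}(\kappa_j)\le\frac{\sigma^2}{L_j^2}\cdot L_j\cdot\frac{1+\rho}{1-\rho}$, which is Eq.~\eqref{eq:segment-variance-bound}.

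The $O(1/L_j)$ conclusion follows immediately whenever $\sigma^2$ and $\rho$ do not depend on $L_j$.

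The only mildly delicate step is the row-sum bound: one must avoid the cruder count $L_j^2$ and instead bound each row of the covariance matrix by the two-sided geometric series. The case $\rho=0$ needs no separate treatment, since then only the diagonal terms survive and the bound reduces to $\sigma^2/L_j$.
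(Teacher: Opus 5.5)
Your proof is correct and takes the same route as the paper. You get unbiasedness by linearity, expand $\operatorname{Var}(\kappa_j)$ into the covariance double sum, bound each term by $\sigma^2\rho^{|t-s|}$, and close with a geometric series. The only difference is bookkeeping: you bound each row sum by $1+2\sum_{d\ge1}\rho^d$, whereas the paper groups terms by lag $d$, uses contiguity of $\mathcal{I}_j$ to count exactly $L_j-d$ pairs at each lag, and then relaxes to $L_j-d\le L_j$. Both give the same constant $(1+\rho)/(1-\rho)$, and your version does not need contiguity.
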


A detailed proof is provided in Appendix~\ref{app:proof-segment-averaging}. Proposition~\ref{prop:segment-averaging} formalizes how averaging reduces sensitivity to isolated token-level loss fluctuations when correlations decay with token distance. This merging trades some temporal resolution for a more stable estimate of the teacher--student discrepancy over a coherent progress interval.

We do not assume that proximity to the teacher is universally equivalent to reasoning quality. Instead, we use process rewards to identify local ranking conflicts between the teacher-imitation signal and the estimated contribution of reasoning segments. Specifically, if segment $a$ has a larger estimated process contribution than segment $b$, but is assigned a larger average distillation loss, then the distillation signal penalizes the empirically more useful segment more strongly. We regard such an ordering as locally inconsistent:
\begin{equation}
\widetilde{PR}_a > \widetilde{PR}_b
\quad\text{and}\quad
\kappa_a > \kappa_b.
\label{eq:inconsistent-order}
\end{equation}
Rather than treated as an assumption that teacher proximity directly measures segment quality, this criterion is used as an operational diagnostic. A representative case that violates this expected relationship is illustrated in Figure~\ref{fig:voilation}.

The process reward and segment-level average distillation loss have different scales and need not be calibrated across trajectories. We therefore compare their relative order within each response instead of requiring their absolute values to be directly comparable~\cite{NIPS2017_d5e2c0ad,ouyang2022training}. This within-response ranking is insensitive to trajectory-wide offsets and positive rescaling, and it reduces the influence of absolute-score calibration noise. The ranking determines the expected direction of the relation between progress and average distillation loss, while the original score gaps in Eq.~\eqref{eq:segment-loss} quantify the severity of each detected conflict.

For each response $i$, we exclude the initial prefix segment ($j=1$) and sort the remaining segments by decreasing process-reward progress $\widetilde{\mathrm{PR}}_j$, breaking ties by increasing loss $\kappa_j$. Let $\rho_i$ denote this sorted sequence of segment indices. We form adjacent pairs $\mathcal{P}_i = \{(\rho_{i,r}, \rho_{i,r+1}) \mid \widetilde{\mathrm{PR}}_{\rho_{i,r}} > \widetilde{\mathrm{PR}}_{\rho_{i,r+1}}\}$, where adjacency is defined on the ranked sequence rather than temporal order.

For each pair $(a,b) \in \mathcal{P}_i$, an order violation occurs if the higher-progress segment $a$ incurs a higher distillation loss, quantified as $v_i(a,b) = \left[(\widetilde{\mathrm{PR}}_a - \widetilde{\mathrm{PR}}_b)(\kappa_a - \kappa_b)\right]_{+}$. The inconsistency score of segment $j$ is accumulated across all adjacent pairs it participates in:
\begin{equation}
\mathrm{Inc}_j = \sum_{(a,b) \in \mathcal{P}_i : \, j \in \{a,b\}} v_i(a,b).
\label{eq:segment-inconsistency}
\end{equation}

\paragraph{Segment Masking.}
Masking is applied to eligible responses containing at least $\max(3, n_{\min})$ segments with $|\mathcal{P}_i| > 0$. Given a masking ratio $q\%$, the budget for response $i$ is $b_i = \lceil \frac{q}{100}n_i \rceil$. We rank candidate segments ($j \ge 2$) by $\mathrm{Inc}_j$ in descending order, and select the top-$b_i$ segments with strictly positive inconsistency scores for masking. 

Let $\mathcal{M}_i$ denote the set of selected segment indices in response $i$. The token-level mask is defined as $M_t^{(i)} = 0$ if token $t$ belongs to any segment $j \in \mathcal{M}_i$, and $M_t^{(i)} = 1$ otherwise. Let $Z_i = \sum_{t=1}^{T_i} M_t^{(i)}$ denote the number of unmasked tokens in response $i$. Applying this mask yields the \method{} objective:
\begin{equation}
\mathcal{L}_{\mathrm{R^2\text{-}OPD}}
=
\mathbb{E}_{x_i,\, y_i \sim \pi_S(\cdot \mid x_i)}
\left[
\frac{1}{Z_i}
\sum_{t=1}^{T_i}
M_t^{(i)} \,
\ell_t^{\mathrm{KL},\mathcal{S},(i)}
\right].
\label{eq:r-opd-objective}
\end{equation}
Masked tokens therefore contribute neither to the loss numerator nor to the normalization denominator, keeping the per-response loss scale comparable across different masking ratios. Apart from masking and the corresponding renormalization, the policy-optimization procedure remains otherwise unchanged.
\section{Experiments}
\label{sec:experiments}

\begin{table*}[t]
\centering
\small
\setlength{\tabcolsep}{7pt}
\renewcommand{\arraystretch}{1.15}
\begin{tabular}{l|cc|cc|cc|cc}
\toprule
\multicolumn{1}{c|}{\multirow{2}{*}{Method}}
& \multicolumn{2}{c|}{AIME 24}
& \multicolumn{2}{c|}{AIME 25}
& \multicolumn{2}{c|}{Olympiad}
& \multicolumn{2}{c}{Avg.} \\
& avg@4 & pass@4
& avg@4 & pass@4
& avg@4 & pass@4
& avg@4 & pass@4 \\
\midrule
Student  
& 22.50 & 43.33
& 23.33 & \textbf{36.67}
& 43.19 & 58.31
& 29.67 & 46.10 \\
Teacher 
& 41.67 & 56.67
& 30.83 & 43.44
& 53.28 & 68.43
& 41.92 & 56.18 \\
OPD~\cite{ICLR2024_5be69a58}
& 28.33 & 50.00
& 22.50 & 30.00
& 46.86 & 62.10
& 32.55 & 47.37 \\
E-OPD~\cite{jin2026entropyaware}
& 18.33 & 36.67
& 12.50 & 23.33
& 49.91 & 64.56
& 26.91 & 41.52 \\
TIP-OPD~\cite{xu2026tip}
& 17.50 & 33.33
& 11.67 & 20.00
& 48.24 & 62.83
& 25.80 & 38.72 \\
IW-OPD~\cite{xie2026position}
& 20.83 & 36.67
& 17.50 & 26.67
& 43.59 & 59.09
& 27.31 & 40.81 \\ 
Uni-OPD~\cite{hou2026uni}
& 20.00 & 43.33
& 19.17 & 26.67
& \textbf{53.16} & \textbf{69.97}
& 30.78 & 46.66 \\ 
\textbf{\method{} (Ours)}
& \textbf{32.50} & \textbf{56.67}
& \textbf{25.83} & \textbf{36.67}
& 46.86 & 62.19
& \textbf{35.06} & \textbf{51.83} \\
\bottomrule
\end{tabular}
\caption{OPD performance on DeepSeek-R1-Distill-Qwen-1.5B with JustRL as the teacher model.}
\label{tab:main-results}
\end{table*}

\subsection{Experimental Setup}
\label{sec:experimental-setup}
Our primary experiments employ DeepSeek-R1-Distill-Qwen-1.5B as the student and JustRL-1.5B as the teacher. As highlighted by \citet{li2026rethinking}, effective distillation requires a teacher with complementary knowledge rather than merely a larger parameter scale. The former is a compact model distilled from DeepSeek-R1 and inherits its reasoning-oriented training recipe~\cite{guo2025deepseek}, while the latter already demonstrates that a simple RL recipe can elicit strong reasoning performance from a 1.5B model~\cite{he2025justrl}.  To assess whether our approach generalizes across different model families, we further evaluate on a heterogeneous setup using Qwen3-1.7B~\cite{yang2025qwen3} as the student and e3-1.7B~\cite{setlur2025e3} as the teacher.

All models are trained on the deduplicated DAPO-Math-17K dataset~\cite{li2026rethinking} for one epoch using the AdamW optimizer with a learning rate of $5\times10^{-6}$ and a global batch size of 64. The maximum prompt and response lengths are set to 1,024 and 7,168 tokens, respectively. For the support-restricted reverse-KL objective, we set $H=16$ and construct $\mathcal{S}_t$ from the student's 16 highest-probability tokens at each decoding step. Downstream reasoning performance is evaluated on AIME~2024~\cite{aime_2024}, AIME~2025~\cite{aime_2025}, and OlympiadBench~\cite{he-etal-2024-olympiadbench}. For Rank-Based Conflict Detection and Segment Masking, we generate $N_{\mathrm{eval}}=8$ answer-eliciting rollouts per evaluated boundary using a sampling temperature of 0.7, $\text{top-}k=50$, $\text{top-}p=1.0$, and a maximum rollout length of 300 tokens. Hyperparameters for conflict detection are set to $S_{\min}=3$ and $n_{\min}=3$, with a segment-level masking ratio of $q=30\%$. Complete optimization, rollout-construction, and filtering details are provided in Appendix~\ref{app:training-details}.

\subsection{Evaluation Baselines}
\label{sec:evaluation-baselines}

We compare \method{} with standard OPD and four recent research. \textbf{OPD} applies dense token-level reverse-KL supervision from the teacher to trajectories sampled on-policy from the student~\cite{ICLR2024_5be69a58}. \textbf{E-OPD} augments reverse-KL training with forward KL at positions where the teacher distribution has high entropy, aiming to preserve plausible alternatives and avoid mode collapse~\cite{jin2026entropyaware}. \textbf{TIP-OPD} characterizes token importance using student entropy and the divergence between teacher and student, retaining both uncertain positions and low-entropy positions where the student is confidently misaligned with the teacher~\cite{xu2026tip}. \textbf{Uni-OPD} addresses insufficient student exploration through data balancing and unreliable teacher supervision using outcome-guided margin calibration~\cite{hou2026uni}. \textbf{IW-OPD} studies position bias in OPD and weights tokens according to accumulated student-teacher discrepancy, emphasizing earlier positions while downweighting later positions of less reliable supervision~\cite{xie2026position}.

We report task accuracy using the task-specific answer verifier. For each benchmark, avg@4 averages accuracy over four sampled responses, whereas pass@4 measures whether at least one of the four responses is correct. During evaluation, responses are sampled with a temperature of \(1.0\), top-\(k\) of \(50\), and top-\(p\) of \(0.95\), with the maximum response length set to \(8{,}192\) tokens, which aligns with the literature.

\begin{table}[t]
\centering
\small
\setlength{\tabcolsep}{2pt}
\renewcommand{\arraystretch}{1.10}
\begin{tabular}{l|cc|cc|cc}
\toprule
\multirow{2}{*}{Dataset}
& \multicolumn{2}{c|}{Base}
& \multicolumn{2}{c|}{OPD}
& \multicolumn{2}{c}{\textbf{\method{}}} \\
& avg@4 & pass@4
& avg@4 & pass@4
& avg@4 & pass@4 \\
\midrule
AIME 24
& 24.17 & 30.00
& 22.50 & 36.67
& \textbf{25.00} & \textbf{40.00} \\
AIME 25
& 18.75 & 20.00
& \textbf{27.50} & 33.33
& 25.83 & \textbf{36.67} \\
Olympiad
& 50.10 & 63.87
& 53.82 & 67.09
& \textbf{54.31} & \textbf{67.91} \\
Avg.
& 31.01 & 37.96
& 34.61 & 45.70
& \textbf{35.04} & \textbf{48.19} \\
\bottomrule
\end{tabular}
\caption{Transfer performance of \method{} on Qwen3-1.7B with e3-1.7B as the teacher model.}
\label{tab:qwen3-results}
\end{table}

\subsection{Main Results}
\label{sec:main-results}
We summarize results comparison on DeepSeek-R1-Distill-Qwen-1.5B model in Table~\ref{tab:main-results}. It shows that \method{} achieves the best aggregate performance, with 35.06 avg@4 and 51.83 pass@4. It outperforms standard OPD by 2.51 and 4.46 points, respectively, with the largest gains observed on the two AIME benchmarks. Among the recent OPD variants, Uni-OPD is the strongest overall competitor, yet \method{} exceeds it by 4.28 avg@4 and 5.17 pass@4 points. Uni-OPD remains stronger on OlympiadBench, suggesting the benefit of progress-aware filtering varies with the benchmark.

The relatively weaker AIME results of E-OPD and TIP-OPD reflect an interaction between entropy-sensitive supervision and the response-length constraint. Emphasizing uncertain positions can preserve alternative reasoning, but may also increase the likelihood of lengthy or unfinished reasoning on challenging problems. Uni-OPD may likewise be affected because its correctness-aware calibration relies on complete, verifiable trajectories, whereas our maximum training response length is consistently shorter than that used in its original study, suggesting that \method{} can result in more concise answers for reasoning tasks.

Table~\ref{tab:qwen3-results} further evaluates transfer to the Qwen3-1.7B/e3-1.7B pair. Although the avg@4 gain over OPD is modest, \method{} consistently improves the aggregate pass@4 from 45.70 to 48.19 and obtains higher pass@4 on all three benchmarks. These results suggest that progress-aware filtering transfers across model families, while its effective size may vary across datasets and metrics.

\subsection{Ablations}
We conduct ablation studies to examine two design choices central to \method{}. First, we vary the masking ratio $q$ to assess how the amount of filtered supervision affects downstream performance. Second, we remove sign-consistent segment merging to isolate its contribution to the reliability of PR--KL rank comparisons and the resulting task accuracy. All ablations use the main DeepSeek-R1-Distill-Qwen-1.5B/JustRL-1.5B configuration, with all other training and evaluation settings held fixed unless otherwise specified.

\begin{figure}
    \centering
    \includegraphics[width=0.9\linewidth]{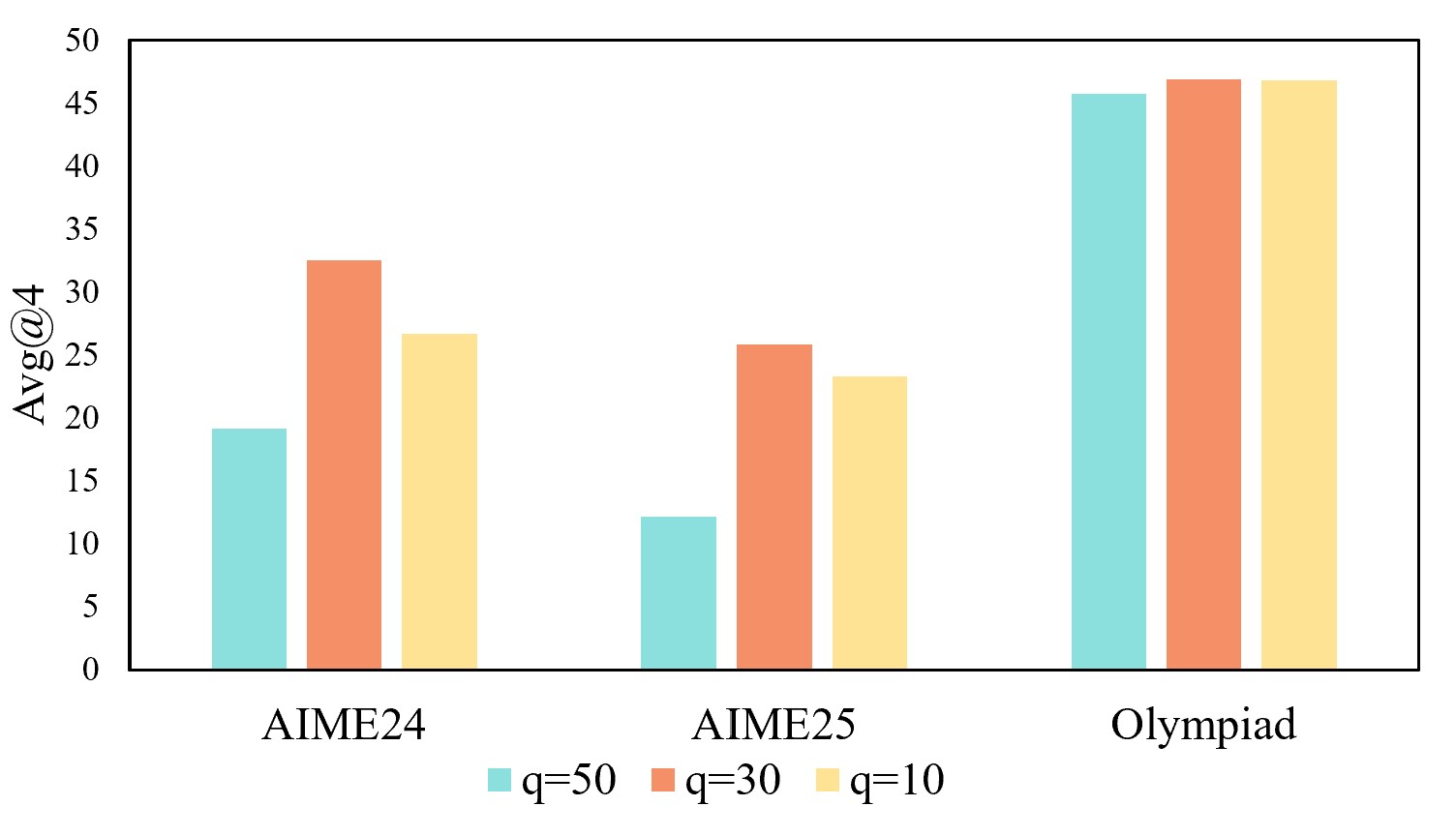}
    \caption{Sensitivity of \method{} to the masking ratio $q$. A moderate ratio of $q=30$ provides the best overall performance, whereas overly aggressive masking substantially degrades accuracy on AIME~2024 and AIME~2025.}
    \label{fig:ablition-q}
\end{figure}

\paragraph{Sensitivity to the Masking Ratio.}
Figure~\ref{fig:ablition-q} examines the effect of the masking ratio $q$, which controls the proportion of merged reasoning segments considered for filtering. A moderate masking ratio of $q=30$ achieves the strongest performance across all three benchmarks, reaching 32.50 on AIME~2024, 25.83 on AIME~2025, and 46.86 on OlympiadBench. Reducing the ratio to $q=10$ yields lower performance on the two AIME benchmarks, suggesting that overly conservative filtering may retain a substantial amount of progress-conflicting supervision. Conversely, increasing $q$ to 50 causes a pronounced degradation, particularly on AIME~2024 and AIME~2025, where accuracy falls to 19.17 and 12.15, respectively. This indicates that overly aggressive masking can remove otherwise useful teacher guidance. Overall, $q=30$ provides a favorable balance between suppressing unreliable distillation signals and preserving informative supervision. Performance on OlympiadBench remains comparatively stable across the three settings, indicating lower sensitivity to the masking ratio on this benchmark.

\paragraph{Effectiveness of Sign-Consistent Merging.}
\begin{figure}
    \centering
    \includegraphics[width=\linewidth]{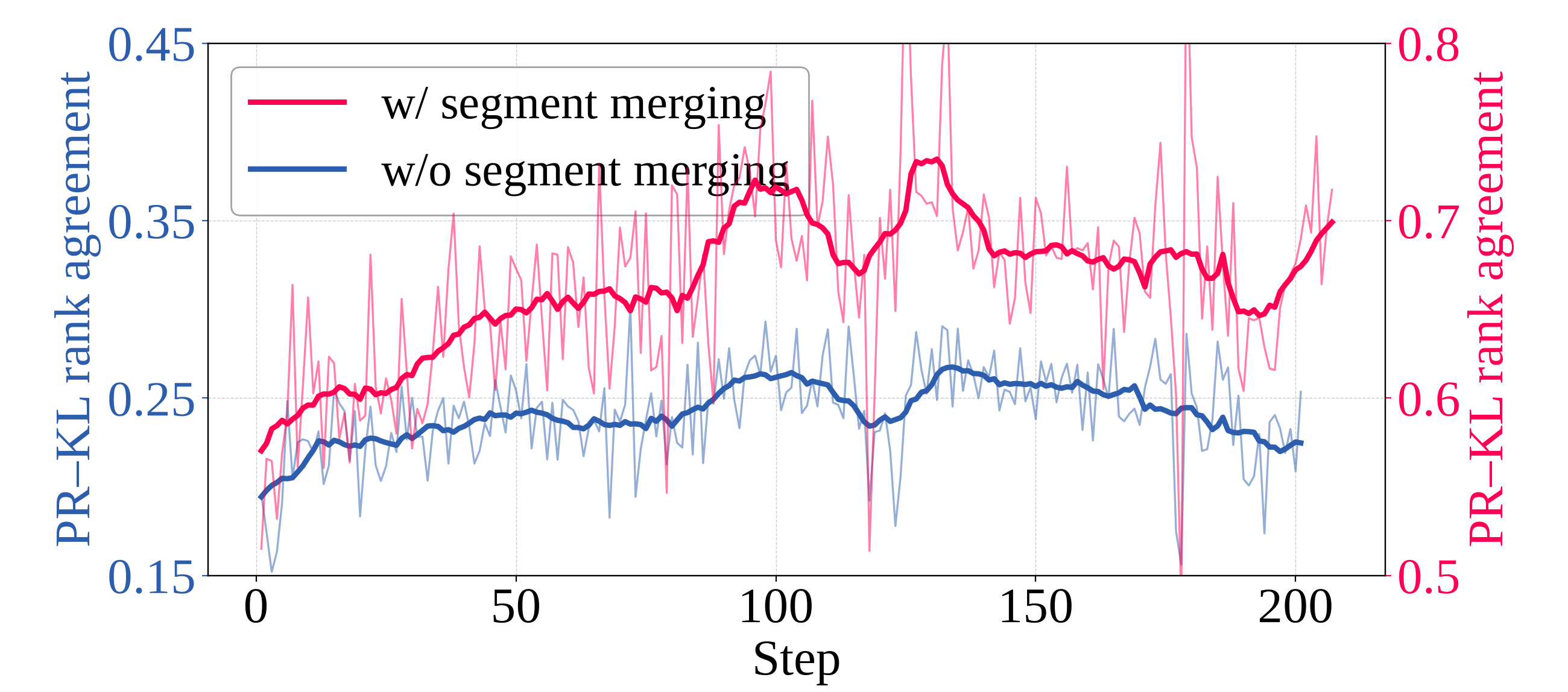}
    \caption{PR--KL rank agreement during training with and without sign-consistent segment merging. Light lines show per-step values, and dark lines show 15-step moving averages. The two vertical axes use the same numerical span. }
    \label{fig:aggregation-consistency}
\end{figure}
To verify if sign-consistent merging improves unit reliability for conflict detection, we measure PR--KL rank agreement at each step. An adjacent segment pair is deemed consistent if higher process reward corresponds to a non-greater distillation loss $\kappa$. We record the average proportion of consistent pairs per response, where 0.5 represents a random baseline and higher values indicate stronger alignment between estimated progress and teacher--student agreement.

As shown in Figure~\ref{fig:aggregation-consistency}, sign-consistent merging raises the smoothed PR--KL rank agreement from approximately $0.20$--$0.27$ to $0.55$--$0.73$ and keeps it above the random baseline for nearly the entire training trajectory. This persistent gap indicates that merging reduces boundary-level noise and yields more reliable units for conflict detection. Consistently, Table~\ref{tab:merging-ablation} shows that merging improves avg@4 by 15.00 and 14.16 points on AIME~2024 and AIME~2025, respectively, while yielding a smaller decrease of 0.96 points on OlympiadBench. Together, these results show that improved rank reliability translates into more effective OPD supervision, particularly on the more challenging AIME benchmarks.

\begin{table}[t]
\centering
\small
\setlength{\tabcolsep}{5pt}
\renewcommand{\arraystretch}{1.10}
\begin{tabular}{l|ccc}
\toprule
Dataset & No Merge & \method{} & $\Delta$ \\
\midrule
AIME 24  & 17.5 & 32.50 & 15.0 \\
AIME 25  & 11.67 & 25.83 &  14.16 \\
Olympiad & 47.8 & 46.86 & -0.96 \\
\bottomrule
\end{tabular}
\caption{Effect of sign-consistent segment merging on avg@4 accuracy. $\Delta$ denotes the improvement of the full \method{} over its no-merging variant.}
\label{tab:merging-ablation}
\end{table}

\section{Related works}
\paragraph{Process Rewards for Reasoning Language Models}
Process rewards evaluate intermediate reasoning states and provide finer-grained credit assignment than terminal outcome rewards \cite{Uesato2022SolvingMW,lightman2024let}.  
To reduce the cost of human annotation, subsequent methods automatically construct process supervision from sampled continuations, outcome verification, or model-based judgments~\cite{wang2024math,Luo2024ImproveMR,setlur2025rewarding,zhang2025lessons,yang-etal-2025-beyond-first}. 
These signals have been used both training-time policy optimization and inference-time candidate selection or search~\cite{zhang2024rest,cui2025process, yuan2026verifiable,xie2026step}. 
However, process rewards can be affected by finite-rollout noise, verifier errors, and distribution shift~\cite{jia2025we,song-etal-2025-prmbench,dontsov-etal-2026-distribution}. Unlike prior work that directly optimizes or searches against process rewards, we use reasoning progress as an independent reliability test for OPD supervision and aggregate adjacent progress-consistent spans to reduce estimation noise.

\paragraph{Reliability and Selectivity of OPD Signals}
Recent work has investigated the conditions under which dense token-level supervision in OPD supports effective capability transfer. Large student–teacher discrepancies can misdirect exploration, while teachers may prefer plausible but incorrect solutions over correct alternative reasoning paths \cite{wang2026demystifying, akhondzadeh2026reward}.  At a coarser granularity, aggregated token-level guidance can rank correct trajectories below incorrect ones, motivating calibration against outcome rewards. Together, these findings suggest that teacher supervision on student rollouts should be assessed rather than applied uniformly~\cite{hou2026uni}.
At the token level, high teacher entropy has motivated selectively combining reverse and forward KL to preserve plausible alternatives, while student entropy and teacher–student divergence have been used to identify informative training positions~\cite{jin2026entropyaware,xu2026tip}. Supervision quality has also been shown to deteriorate at later positions as the discrepancy between student and teacher accumulates along a rollout, motivating position-sensitive importance weighting~\cite{xie2026position}. Existing methods assess OPD signals using token uncertainty, teacher–student discrepancy, sequence position, or trajectory correctness. Our work instead evaluates teacher supervision at the reasoning-span level by examining its consistency with estimated reasoning progress.

\section{Conclusion}
This work identifies a limitation of reasoning-oriented on-policy distillation, where agreement with the teacher distribution does not necessarily reflect whether an intermediate step advances the solution. To this end, we introduce \method{}, which estimates reasoning progress from on-policy continuations, merges adjacent segments with sign-consistent progress, and compares the resulting ranking with that induced by teacher--student divergence. Rather than replacing the OPD objective with an additional reward, \method{} utilizes ranking conflicts as a reliability test and selectively masks supervision that may discourage productive reasoning. Experiments show that this selective treatment of teacher supervision improves aggregate reasoning performance over standard OPD, with particularly strong gains on the more challenging AIME benchmarks and consistent improvements in pass@4. 
Future work will investigate approaches to reduce the cost and variance of process reward estimation and examine whether progress-aware filtering generalizes to larger models and other diverse domains.

\bibliography{aaai2027}

\ifdefined\OpenSourceVersion
\appendix
\setcounter{secnumdepth}{2}
\section{Proofs}
\label{app:detailed-proofs}

\subsection{Proof of Proposition~\ref{prop:segment-averaging}}
\label{app:proof-segment-averaging}

\begin{proof}
Fix a merged segment \(j\) and condition on its contiguous token-index set
\(\mathcal{I}_j=\{u,u+1,\ldots,u+L_j-1\}\).  For notational simplicity,
write \(L=L_j\).  By Eq.~\eqref{eq:segment-loss} and the decomposition
\(\ell_t^{\mathrm{KL},\mathcal{S}}=\mu_j+\epsilon_t\),
\[
\begin{aligned}
\kappa_j
&=
\frac{1}{L}\sum_{t\in\mathcal{I}_j}\ell_t^{\mathrm{KL},\mathcal{S}}\\
&=
\frac{1}{L}\sum_{t\in\mathcal{I}_j}
(\mu_j+\epsilon_t)\\
&=
\mu_j+\frac{1}{L}\sum_{t\in\mathcal{I}_j}\epsilon_t.
\end{aligned}
\]
Taking expectations and using \(\mathbb{E}[\epsilon_t]=0\) for every
\(t\in\mathcal{I}_j\) gives
\[
\mathbb{E}[\kappa_j]
=
\mu_j+\frac{1}{L}\sum_{t\in\mathcal{I}_j}
\mathbb{E}[\epsilon_t]
=
\mu_j.
\]
Therefore, \(\kappa_j\) is an unbiased estimator of the locally constant
segment-average distillation loss \(\mu_j\).

It remains to bound the variance.  Adding the constant \(\mu_j\) does not
change variance, so
\[
\begin{aligned}
\operatorname{Var}(\kappa_j)
&=
\operatorname{Var}\!\left(
\frac{1}{L}\sum_{t\in\mathcal{I}_j}\epsilon_t
\right)\\
&=
\frac{1}{L^2}
\sum_{t\in\mathcal{I}_j}
\sum_{s\in\mathcal{I}_j}
\operatorname{Cov}(\epsilon_t,\epsilon_s).
\end{aligned}
\]
The covariance sum contains \(L\) diagonal terms with \(t=s\).  The
assumed covariance bound implies
\[
\operatorname{Var}(\epsilon_t)
=
\operatorname{Cov}(\epsilon_t,\epsilon_t)
\leq \sigma^2.
\]
For every distance \(d\in\{1,\ldots,L-1\}\), contiguity of
\(\mathcal{I}_j\) implies that exactly \(L-d\) unordered pairs of token
positions satisfy \(|t-s|=d\).  Each unordered pair occurs twice in the
double sum, once as \((t,s)\) and once as \((s,t)\).  Since a covariance
may be negative, we upper-bound each off-diagonal term by its absolute
value.  Hence,
\[
\begin{aligned}
\operatorname{Var}(\kappa_j)
&\leq
\frac{1}{L^2}
\left[
L\sigma^2
+
2\sigma^2\sum_{d=1}^{L-1}(L-d)\rho^d
\right]\\
&=
\frac{\sigma^2}{L^2}
\left[
L+2\sum_{d=1}^{L-1}(L-d)\rho^d
\right].
\end{aligned}
\]
Using \(L-d\leq L\), extending the finite nonnegative sum to an infinite
one, and applying the geometric-series identity
\(\sum_{d=1}^{\infty}\rho^d=\rho/(1-\rho)\), which is valid because
\(0\leq\rho<1\), we obtain
\[
\begin{aligned}
\operatorname{Var}(\kappa_j)
&\leq
\frac{\sigma^2}{L^2}
\left[
L+2L\sum_{d=1}^{\infty}\rho^d
\right]\\
&=
\frac{\sigma^2}{L}
\left(
1+\frac{2\rho}{1-\rho}
\right)\\
&=
\frac{\sigma^2}{L}
\frac{1+\rho}{1-\rho}.
\end{aligned}
\]
Restoring \(L=L_j\) proves Eq.~\eqref{eq:segment-variance-bound}.  If
\(\sigma^2\) and \(\rho\) are bounded independently of \(L_j\), the
multiplicative factor
\(\sigma^2(1+\rho)/(1-\rho)\) is constant with respect to segment length.
Consequently, the variance upper bound scales as \(O(1/L_j)\).
\end{proof}

\section{Training Details}
\label{app:training-details}

This section provides the optimization and sequence settings, training data, on-policy response generation procedure, model-specific reasoning templates, process-reward rollout construction, eligibility and fallback rules, and complete pseudocode of \method{}. Model configurations are described in the main paper and are therefore not repeated here.

\subsection{Optimization and Sequence Settings}
\label{app:training-setup}

Except for its progress-aware masking stage, \method{} follows the standard OPD training loop: the student generates responses on-policy, the teacher supplies token-level distributional supervision on the resulting student contexts, and the student is updated using the reverse-KL objective. Table~\ref{tab:training-configuration} separates the settings of the OPD update from those used only for process-reward estimation and segment filtering.

\begin{table}[t]
\centering
\small
\setlength{\tabcolsep}{6pt}
\renewcommand{\arraystretch}{1.18}
\begin{tabular}{@{}p{0.55\columnwidth}p{0.36\columnwidth}@{}}
\toprule
Setting & Value \\
\midrule
\multicolumn{2}{@{}l}{\textit{Optimization and sequence settings}} \\
Training epochs & 1 \\
Optimizer & AdamW \\
Learning rate & $5\times10^{-6}$ \\
Global batch size & 64 \\
Maximum prompt length & 1,024 tokens \\
Maximum response length & 7,168 tokens \\
KL support size & Student top-16 ($H=16$) \\
\addlinespace[2pt]
\multicolumn{2}{@{}l}{\textit{Process-reward estimation}} \\
Rollouts per evaluated boundary & $N_{\mathrm{eval}}=8$ \\
Sampling temperature & 0.7 \\
Top-$k$ & 50 \\
Top-$p$ & 1.0 \\
Maximum rollout length & 300 tokens \\
\addlinespace[2pt]
\multicolumn{2}{@{}l}{\textit{Segment filtering}} \\
Minimum sentences between boundaries & $S_{\min}=3$ \\
Minimum merged segments & $n_{\min}=3$ \\
Segment masking ratio & $q=30\%$ \\
\bottomrule
\end{tabular}
\caption{Optimization, process-reward estimation, and filtering settings used in all \method{} experiments.}
\label{tab:training-configuration}
\end{table}

\subsection{Training Data}
\label{app:training-data}

All models are trained for one epoch on the same deduplicated DAPO-Math-17K corpus used in the main experiments. All reported configurations use the same prompt set, so their differences do not arise from configuration-specific data selection. Each training example consists of a mathematical prompt $x_i$ and a verifier-compatible ground-truth answer $g_i$. Responses are generated by the current student during training rather than precomputed or replaced by teacher-generated solutions.

\subsection{On-Policy Response Generation}
\label{app:on-policy-generation}

For each prompt $x_i$, the current student samples an on-policy response $y_i$ with a maximum length of 7,168 tokens. This response is the training trajectory on which the OPD loss is defined. The teacher is queried on the same token contexts to compute the support-restricted reverse-KL loss. At each position, the approximate support contains the 16 tokens assigned the highest probability by the student, as specified by $H=16$.

The token mask is initialized to one over the entire response. \method{} subsequently changes this mask only when the response satisfies all process-reward and conflict-detection requirements described below. The short answer-eliciting rollouts used for boundary evaluation are separate generations: they estimate intermediate solve probabilities, but their tokens are not included in the KL objective and are not used as replacement training trajectories.

\subsection{Model-Specific Reasoning Templates}
\label{app:reasoning-templates}

We use the chat template shipped with each model checkpoint rather than imposing a single wrapper across model families. The student and teacher in each configuration share the same family-specific format: DeepSeek-R1-Distill-Qwen-1.5B and JustRL-DeepSeek-1.5B use the DeepSeek reasoning template, whereas Qwen3-1.7B and e3-1.7B use the Qwen3 thinking-mode template. This distinction is important because the former is based on Qwen2.5 but does not use the standard Qwen2.5 ChatML wrapper.

\lstdefinestyle{templatebox}{
  basicstyle=\footnotesize\ttfamily,
  numbers=none,
  frame=single,
  framerule=0.5pt,
  framesep=3pt,
  framextopmargin=2pt,
  framexbottommargin=2pt,
  framexleftmargin=0pt,
  framexrightmargin=0pt,
  linewidth=\columnwidth,
  breaklines=true,
  breakatwhitespace=false,
  columns=fullflexible,
  keepspaces=true,
  showstringspaces=false,
  aboveskip=5pt,
  belowskip=5pt,
  xleftmargin=0pt,
  xrightmargin=0pt
}

Across the two configurations, the benchmark problem is inserted into the following common task-level instruction:
\begin{lstlisting}[style=templatebox]
Solve the following math problem step by step.
Put your answer inside \boxed{}.

{problem}

Remember to put your answer inside \boxed{}.
\end{lstlisting}
Only the model-specific special-token wrapper differs. The schematics below give the single-turn path used in our experiments and preserve the ordering and line breaks of the corresponding tokenizer templates. For the DeepSeek template, we use ASCII aliases for its Unicode special tokens to ensure compatibility with the paper's \LaTeX{} engine: \texttt{<DS-BOS>}, \texttt{<DS-User>}, \texttt{<DS-Assistant>}, and \texttt{<DS-EOS>} denote the checkpoint's beginning-of-sequence, user, assistant, and end-of-sequence tokens, respectively. These aliases are used only for typesetting; the implementation uses the literal tokens from the tokenizer file.

\paragraph{Qwen2.5-based DeepSeek group.}
For both the DeepSeek-R1-Distill-Qwen-1.5B student and the JustRL-DeepSeek-1.5B teacher, applying the checkpoint template with \texttt{add\_generation\_prompt=True} produces a prefix ending in the assistant marker followed immediately by an opening \texttt{<think>} tag:
\begin{lstlisting}[style=templatebox]
<DS-BOS><DS-User>{task prompt}<DS-Assistant><think>
{reasoning}
</think>

{final answer}<DS-EOS>
\end{lstlisting}
Thus, the opening reasoning tag is part of the formatted generation prefix. The generated continuation supplies the reasoning content, closes the tag, emits the final answer, and terminates with the checkpoint's end-of-sequence token. We retain this DeepSeek formatting throughout response generation and teacher scoring; we do not replace it with Qwen2.5's \texttt{<|im\_start|>} and \texttt{<|im\_end|>} markers.

\paragraph{Qwen3 group.}
For the Qwen3-1.7B student and e3-1.7B teacher, we apply the Qwen3 template in thinking mode, i.e., with \texttt{enable\_thinking=True}. The corresponding single-turn sequence is:
\begin{lstlisting}[style=templatebox]
<|im_start|>user
{task prompt}<|im_end|>
<|im_start|>assistant
<think>
{reasoning}
</think>

{final answer}<|im_end|>
\end{lstlisting}
Unlike the DeepSeek template, the Qwen3 generation prefix ends after the assistant-role newline; in thinking mode, the model itself generates the \texttt{<think>} block before the final response. Setting \texttt{enable\_thinking=False} would instead insert an empty \texttt{<think>}\texttt{</think>} block into the prompt, but this non-thinking path is not used in our reasoning experiments.

These templates format the original on-policy responses used by the OPD objective. They are distinct from the additional answer-eliciting instruction \(\mathcal{I}_{\mathrm{ans}}\) used only to construct boundary-evaluation rollouts in Appendix~\ref{app:answer-instruction}.

\subsection{Process-Reward Rollout Construction}
\label{app:rollout-construction}

For an eligible response, process rewards are estimated by evaluating whether the student can still reach the ground-truth answer from selected intermediate reasoning states. We call the generations used for this evaluation \emph{answer-eliciting rollouts}; after this definition, we refer to them simply as rollouts. Each rollout is sampled from the current student after conditioning on a standardized reasoning prefix and a fixed answer-eliciting instruction. The construction of the evaluated boundaries and standardized prompts is detailed next.

\subsubsection{Segmentation Lexicon and Boundary Matching}
\label{app:segmentation}

Table~\ref{tab:segmentation-lexicon} lists the complete discourse-marker
lexicon used by the segmentation procedure. The categories are descriptive;
matching is performed only against the literal expressions shown in the
table.

Before matching, the decoded response is treated as plain text. Matching is
case-insensitive, uses word boundaries, and permits one or more whitespace
characters between the words of a multiword marker. Matches are processed
from left to right. A candidate cut is placed immediately before the matched
marker, so the marker begins the next segment. The cut is retained only when
the text since the previous accepted boundary contains at least \(S_{\min}\)
sentences, where the implementation approximates the sentence count by the
number of periods. If no cut is accepted, the complete response
is treated as a single segment.

\subsubsection{Answer-Eliciting Instruction and Prefix Standardization}
\label{app:answer-instruction}

The symbol \(\mathcal{I}_{\mathrm{ans}}\) denotes a fixed textual suffix,
rather than a learned component. It instructs the student to stop extending
the reasoning chain and emit a verifier-compatible final answer based on the
reasoning already contained in the prefix. Its exact wording is:

\begin{quote}
\small\ttfamily
Based on the reasoning above, directly give the final answer. Put the final
answer within \textbackslash{}boxed\{\}. If no definitive answer can be derived
from the existing reasoning, output \textbackslash{}boxed\{no answer\}.
\end{quote}

A segment boundary commonly falls inside an open \texttt{<think>} block. If
the truncated response prefix contains more occurrences of
\texttt{<think>} than \texttt{</think>}, we remove trailing newline
characters and append \texttt{</think>} followed by a blank line before
\(\mathcal{I}_{\mathrm{ans}}\). Otherwise, the prefix is left unchanged.
Closing the unmatched tag supplies the mode-transition cue needed for the
answer-eliciting rollout to follow the answer instruction instead of treating
it as additional reasoning. This preprocessing is applied only to the
rollout prompt; it does not alter the original student trajectory
used for training.

\subsubsection{Boundary Rollouts and Solve-Probability Estimation}
\label{app:boundary-rollouts}

Let $p_m=(x,y_{1:b_m})$ denote the state at the end of segment $m$. For every nonterminal boundary of a PR-available response, we draw $N_{\mathrm{eval}}=8$ rollouts from the student conditioned on the standardized prefix $\bar p_m$ and \(\mathcal{I}_{\mathrm{ans}}\). Rollouts use temperature 0.7, top-$k=50$, top-$p=1.0$, and a maximum generation length of 300 tokens. The task-specific verifier then evaluates whether the completed sequence reaches the ground-truth answer $g_i$.

The intermediate solve probability $\hat S_m$ is the empirical fraction of successful rollouts. Following Eq.~\eqref{eq:solve-prob}, we set $\hat S_0=0$ and reuse the verifier result of the original on-policy response for the terminal state $\hat S_M$. Consequently, additional rollouts are required only for the $M-1$ nonterminal boundaries. The process reward of segment $m$ is the finite difference $PR_m=\hat S_m-\hat S_{m-1}$.

\subsection{Eligibility and Fallback Rules}
\label{app:eligibility-rules}

The implementation uses conservative fallback behavior: whenever a response does not provide enough information to define a reliable ranking conflict, its original OPD supervision is retained. Table~\ref{tab:fallback-rules} lists the complete decision logic.

\begin{table}[t]
\centering
\small
\setlength{\tabcolsep}{5pt}
\renewcommand{\arraystretch}{1.18}
\begin{tabular}{@{}p{0.46\columnwidth}p{0.48\columnwidth}@{}}
\toprule
Condition & Action \\
\midrule
The string-level pre-check does not find $g_i$ in $y_i$, or segmentation yields fewer than two segments.
& Skip process-reward rollouts and retain all token-level OPD supervision. \\
The response has fewer than $\max(3,n_{\min})$ merged segments.
& Do not apply segment masking. \\
No strict adjacent pair is available after process-reward ranking.
& Do not apply segment masking. \\
All candidate segments have zero inconsistency score.
& Do not apply segment masking. \\
At least one candidate has a positive inconsistency score.
& Mask up to the response-specific budget; retain every other token. \\
\bottomrule
\end{tabular}
\caption{Eligibility checks and conservative fallback actions used by \method{}.}
\label{tab:fallback-rules}
\end{table}

\begin{table}[t]
\centering
\small
\begin{tabular}{@{}p{0.4\columnwidth}p{0.6\columnwidth}@{}}
\toprule
Category & Matched markers \\
\midrule
Reconsideration &
\texttt{wait}, \texttt{hold on}, \texttt{let me reconsider}, \texttt{hmm} \\
Correction &
\texttt{actually} \\
Verification &
\texttt{let me check} \\
Alternative reasoning &
\texttt{alternatively} \\
\bottomrule
\end{tabular}
\caption{Complete lexicon of discourse markers used to propose reasoning-segment boundaries.}
\label{tab:segmentation-lexicon}
\end{table}

\subsection{Sign-Consistent Filtering and Mask Construction}
\label{app:mask-construction}

The segment-level finite differences are first merged into maximal runs with the same sign, with zero-valued rewards absorbed into the current run. For each merged segment $j$, we sum its process rewards to obtain $\widetilde{PR}_{i,j}$ and average its token-level KL losses to obtain $\kappa_{i,j}$. Candidate segments exclude the first merged segment, which is retained to avoid removing the initial problem interpretation and setup.

The remaining segments are ordered by decreasing $\widetilde{PR}_{i,j}$, with increasing $\kappa_{i,j}$ used to break equal-reward ties. Only adjacent pairs with a strict process-reward difference are compared. A positive inconsistency contribution occurs when the segment with greater estimated progress also has greater KL loss, indicating that teacher--student disagreement is larger on the segment that contributes more to solving the problem. Contributions incident to each segment are accumulated into $\mathrm{Inc}_{i,j}$.

For response $i$, the masking budget is $b_i=\lceil(q/100)n_i\rceil$. Among segments $j\geq2$ with $\mathrm{Inc}_{i,j}>0$, we mask at most the $b_i$ highest-scoring segments. If fewer than $b_i$ candidates have positive scores, all positive-score candidates are masked and the unused budget is not reassigned. All unselected segments retain their original supervision. The final loss is normalized by the number of unmasked tokens, as in Eq.~\eqref{eq:r-opd-objective}, so responses with different retained lengths remain comparable within the batch.

\subsection{Training Procedure}
\label{app:training-algorithm}

Algorithm~\ref{alg:r-opd} summarizes the complete batch-level procedure. The student first generates on-policy responses and receives token-level reverse-KL supervision from the teacher. Eligible responses are then evaluated using answer-eliciting rollouts, merged into sign-consistent progress units, checked for conflicts with segment-average KL losses, and selectively masked before the normalized OPD update.

\begin{algorithm}[t]
\caption{Training procedure of \method{} for a single batch.}
\label{alg:r-opd}
\small
\begin{algorithmic}[1]
\Require Batch $\mathcal{B}=\{(x_i,g_i)\}$, student $\pi_S$, teacher $\pi_T$
\Require $H,N_{\mathrm{eval}},q,S_{\min},n_{\min},M_{\max}$
\Ensure Updated student $\pi_S$
\For{each $(x_i,g_i)\in\mathcal{B}$}
    \State Sample $y_i\sim\pi_S(\cdot\mid x_i)$ and set $M_t^{(i)}\gets 1$ for all $t$
    \State Construct each student top-$H$ support $\mathcal{S}_t^{(i)}$ and compute $\ell_t^{\mathrm{KL},\mathcal{S},(i)}$ \Comment{Eq.~\eqref{eq:support-loss}}
    \State $\Sigma_i\gets\textsc{Segment}(y_i;S_{\min},M_{\max})$
    \If{$g_i\notin y_i$ \textbf{or} $|\Sigma_i|<2$}
        \State \textbf{continue} \Comment{PR-unavailable; retain all supervision}
    \EndIf
    \State $M_i\gets|\Sigma_i|$, $\hat S_{i,0}\gets0$, and $\hat S_{i,M_i}\gets\mathcal{R}(y_i,g_i)$
    \For{$m=1$ \textbf{to} $M_i-1$}
        \State Sample $\{c_{i,m}^{(\ell)}\}_{\ell=1}^{N_{\mathrm{eval}}}\sim\pi_S(\cdot\mid\bar p_{i,m},\mathcal{I}_{\mathrm{ans}})$
        \State $\hat S_{i,m}\gets\frac{1}{N_{\mathrm{eval}}}\sum_{\ell=1}^{N_{\mathrm{eval}}}\mathcal{R}(p_{i,m}\mathbin{\Vert}c_{i,m}^{(\ell)},g_i)$
    \EndFor
    \State $PR_{i,m}\gets\hat S_{i,m}-\hat S_{i,m-1}$ for $m=1,\ldots,M_i$
    \State $(\{\mathcal{J}_{i,j}\},\{\mathcal{I}_{i,j}\})_{j=1}^{n_i}\gets\textsc{MergeSameSignRuns}(\Sigma_i,PR_i)$ \Comment{Zeros are absorbed}
    \For{$j=1$ \textbf{to} $n_i$}
        \State $\widetilde{PR}_{i,j}\gets\sum_{m\in\mathcal{J}_{i,j}}PR_{i,m}$ \Comment{Eq.~\eqref{eq:merged-reward}}
        \State $\kappa_{i,j}\gets|\mathcal{I}_{i,j}|^{-1}\sum_{t\in\mathcal{I}_{i,j}}\ell_t^{\mathrm{KL},\mathcal{S},(i)}$ \Comment{Eq.~\eqref{eq:segment-loss}}
    \EndFor
    \If{$n_i<\max(3,n_{\min})$}
        \State \textbf{continue} \Comment{Response is ineligible for masking}
    \EndIf
    \State $\rho_i\gets\textsc{Sort}(\{2,\ldots,n_i\})$ by decreasing $\widetilde{PR}_{i,j}$, then increasing $\kappa_{i,j}$
    \State $\mathcal{P}_i\gets\{(\rho_{i,r},\rho_{i,r+1}):\widetilde{PR}_{i,\rho_{i,r}}>\widetilde{PR}_{i,\rho_{i,r+1}}\}$
    \If{$|\mathcal{P}_i|=0$}
        \State \textbf{continue}
    \EndIf
    \State $\mathrm{Inc}_{i,j}\gets0$ for $j=1,\ldots,n_i$
    \For{each $(a,b)\in\mathcal{P}_i$}
        \State $v_i(a,b)\gets[(\widetilde{PR}_{i,a}-\widetilde{PR}_{i,b})(\kappa_{i,a}-\kappa_{i,b})]_+$
        \State $\mathrm{Inc}_{i,a}\gets\mathrm{Inc}_{i,a}+v_i(a,b)$; $\mathrm{Inc}_{i,b}\gets\mathrm{Inc}_{i,b}+v_i(a,b)$
    \EndFor
    \State $b_i\gets\lceil(q/100)n_i\rceil$
    \State $\mathcal{M}_i\gets$ top-$b_i$ indices in $\{j\ge2:\mathrm{Inc}_{i,j}>0\}$ by decreasing $\mathrm{Inc}_{i,j}$
    \State Set $M_t^{(i)}\gets0$ iff $t\in\mathcal{I}_{i,j}$ for some $j\in\mathcal{M}_i$
\EndFor
\State $Z_i\gets\sum_{t=1}^{T_i}M_t^{(i)}$ for each $i$
\State $\widehat{\mathcal{L}}_{\mathrm{R^2\text{-}OPD}}\gets\frac{1}{|\mathcal{B}|}\sum_i\frac{1}{Z_i}\sum_{t=1}^{T_i}M_t^{(i)}\ell_t^{\mathrm{KL},\mathcal{S},(i)}$
\State Update $\pi_S$ by minimizing $\widehat{\mathcal{L}}_{\mathrm{R^2\text{-}OPD}}$ \Comment{Eq.~\eqref{eq:r-opd-objective}}
\end{algorithmic}
\end{algorithm}

All mask entries are initialized to one. Therefore, every early \textbf{continue} in Algorithm~\ref{alg:r-opd} implements the conservative fallback described in Table~\ref{tab:fallback-rules}.

\onecolumn
\section{Qualitative Case Studies}
\label{app:case-studies}

We provide two AIME~2024 examples in which \method{} produces the verified answer. This section uses the full width of the two-column page so that the complete response boxes remain readable. Unlike a two-column float, each box is part of the normal document flow and can therefore break automatically across pages while respecting the page margins. Each box reproduces the complete evaluator-facing response recorded for the selected sample; blank lines are suppressed uniformly for compact typesetting, and the two non-ASCII mathematical symbols are transliterated as \texttt{omega} and \texttt{sqrt} to ensure robust \LaTeX{} compilation. Responses that reach the generation limit are reproduced through their recorded endpoint and marked as truncated.

\paragraph{Model setting.}
Both cases use the Qwen2.5-based configuration from the main experiments. The base-model responses are generated by the original DeepSeek-R1-Distill-Qwen-1.5B checkpoint. For standard OPD and \method{}, the student is initialized from this same checkpoint and distilled from the same JustRL-DeepSeek-1.5B teacher on DAPO-Math-17K. The two distilled students therefore share the model initialization, teacher, training data, and evaluation protocol; their difference is that standard OPD retains token-level reverse-KL supervision throughout the response, whereas \method{} filters supervision on progress-conflicting segments. All displayed responses use the same AIME~2024 prompt format and an 8,192-token evaluation limit.

Within each incorrect response, the bold label and underlined passages identify the erroneous transition together with its immediate consequence. Longer annotations are split across adjacent lines so that standard underlining remains within the page boundary. We use no color or background shading.

\lstdefinestyle{caseresponsewide}{
  basicstyle=\scriptsize\ttfamily,
  numbers=none,
  frame=single,
  framerule=0.5pt,
  framesep=3pt,
  framextopmargin=3pt,
  framexbottommargin=4pt,
  framexleftmargin=0pt,
  framexrightmargin=0pt,
  linewidth=\textwidth,
  breaklines=true,
  breakatwhitespace=false,
  columns=fullflexible,
  keepspaces=true,
  showstringspaces=false,
  emptylines=0,
  escapeinside={(*@}{@*)},
  aboveskip=6pt,
  belowskip=8pt,
  xleftmargin=0pt,
  xrightmargin=0pt
}

\subsection{Case 1: Hyperbola-Constrained Rhombus}
\label{app:case-study-hyperbola}

\noindent\fbox{\parbox{\dimexpr\textwidth-2\fboxsep-2\fboxrule\relax}{
\small\textbf{Problem.}
Let \(A\), \(B\), \(C\), and \(D\) be points on the hyperbola
\(\frac{x^2}{20}-\frac{y^2}{24}=1\) such that \(ABCD\) is a rhombus whose diagonals intersect at the origin. Find the greatest real number that is less than \(BD^2\) for all such rhombi.
\hfill\textbf{Reference answer:} \(\boxed{480}\).
}}

\medskip
\noindent\textbf{Base-model response (incorrect: \(\boxed{80}\)).}
\begin{lstlisting}[style=caseresponsewide]
Given the hyperbola \(\frac{x^2}{20} - \frac{y^2}{24} = 1\), we need to find the greatest real number that is less than \(BD^2\) for all rhombi \(ABCD\) whose diagonals intersect at the origin. 

1. The hyperbola has vertices at \((\pm \sqrt{20}, 0)\) and \((0, \pm \sqrt{24})\).
2. Points \(A\), \(B\), \(C\), and \(D\) form a rhombus with diagonals intersecting at the origin, implying \(A\) and \(B\) are symmetric with respect to the origin.
3. The slopes of the diagonals \(AC\) and \(BD\) are negative reciprocals, leading to the condition \(\frac{y_1}{x_1} \times \frac{y_2}{x_2} = -1\).
4. Expressing \(BD^2\) in terms of coordinates of \(B\) and \(D\), we get \(BD^2 = 4(x_2^2 + y_2^2)\).
5. Using the hyperbola equation and the slope condition, we express \(x_2^2\) and \(y_2^2\) in terms of a parameter \(k = \frac{x_1}{y_1}\).
6. Substituting and simplifying, we find \(BD^2\) as a function of \(k^2\), denoted \(t\).
7. The function \(f(t) = \frac{960 (1 + t)}{12 - 10 t}\) is analyzed, showing it is increasing for \(0 \leq t < \frac{6}{5}\).
(*@\textbf{[ERRONEOUS STEP]} \underline{\texttt{8. The minimum value of } \(BD^2\) \texttt{ occurs at } \(t=0\)\texttt{, giving } \(BD^2=80\)\texttt{.}}@*)

Thus, the greatest real number that is less than \(BD^2\) for all such rhombi is \(\boxed{80}\).
\end{lstlisting}

\noindent\textbf{OPD response (incorrect: \(\boxed{80}\)).}
\begin{lstlisting}[style=caseresponsewide]
Given the hyperbola \(\frac{x^2}{20} - \frac{y^2}{24} = 1\), points \(A\), \(B\), \(C\), and \(D\) lie on this hyperbola such that \(ABCD\) is a rhombus with diagonals intersecting at the origin. We need to find the greatest real number less than \(BD^2\) for all such rhombi.

1. The hyperbola equation is \(\frac{x^2}{20} - \frac{y^2}{24} = 1\).
2. The rhombus diagonals intersect at the origin and are perpendicular, implying their slopes multiply to -1.
3. Points \(A\), \(B\), \(C\), and \(D\) lie on the hyperbola and are symmetric with respect to the origin.
4. The coordinates of points \(A\), \(B\), \(C\), and \(D\) satisfy the hyperbola equation and the perpendicularity condition.
5. The condition \(abcd + bcd = 0\) and the hyperbola constraints lead to the expression for \(BD^2\).
6. Using the conditions and parametrization, we find \(BD^2 = 4(c^2 + d^2)\).
7. Expressing \(BD^2\) in terms of \(a\) and \(b\), we find
(*@\textbf{[ERRONEOUS STEP]} \underline{\(BD^2=80+\frac{4400}{11t-720}\), where \(t=a^2\).}@*)
(*@\underline{\texttt{8. As } \(t\) \texttt{ approaches infinity, } \(BD^2\) \texttt{ approaches 80 from above.}}@*)
9. The greatest real number less than \(BD^2\) for all such rhombi is the infimum, which is 80.

Thus, the answer is \(\boxed{80}\).
\end{lstlisting}

\noindent\textbf{\method{} response (correct: \(\boxed{480}\)).}
\begin{lstlisting}[style=caseresponsewide]
Given the hyperbola \(\frac{x^2}{20} - \frac{y^2}{24} = 1\), we need to find the greatest real number that is less than \(BD^2\) for all rhombi \(ABCD\) with diagonals intersecting at the origin.

1. The vertices of the rhombus are \(A, B, C, D\) on the hyperbola, with diagonals intersecting at the origin.
2. The coordinates of the vertices can be represented as \(A(a, b)\), \(B(c, d)\), \(C(-a, -b)\), and \(D(-c, -d)\).
3. The condition for the diagonals to be perpendicular is \(ac + bd = 0\).
4. The distances \(AB\) and \(BC\) must be equal, leading to the equation \(ac + bd = 0\).
5. Using the hyperbola equation, we express \(b^2\) and \(d^2\) in terms of \(a\) and \(c\).
6. The distance \(BD^2\) is calculated as \(4(c^2 + d^2)\).
7. By substituting and simplifying, we find that \(BD^2 = 480 + \frac{288000}{u}\) where \(u = 11t - 720\).
8. As \(u\) approaches infinity, \(BD^2\) approaches 480 from above, making the infimum of \(BD^2\) equal to 480.

Thus, the greatest real number that is less than \(BD^2\) for all such rhombi is \(\boxed{480}\).
\end{lstlisting}

The base model and OPD both admit the boundary value \(t=0\) without establishing that it corresponds to a valid rhombus on the hyperbola, yielding 80. In contrast, \method{} preserves the geometric constraints through the limiting argument and correctly identifies 480 as the infimum of \(BD^2\).

\subsection{Case 2: Product over Roots of Unity}
\label{app:case-study-roots}

\noindent\fbox{\parbox{\dimexpr\textwidth-2\fboxsep-2\fboxrule\relax}{
\small\textbf{Problem.}
Let \(\omega\neq1\) be a 13th root of unity. Find the remainder when
\(\prod_{k=0}^{12}(2-2\omega^k+\omega^{2k})\) is divided by 1000.
\hfill\textbf{Reference answer:} \(\boxed{321}\).
}}

\medskip
\noindent\textbf{Base-model response (incorrect: \(\boxed{191}\)).}
\begin{lstlisting}[style=caseresponsewide]
Given \(\omega \neq 1\) as a primitive 13th root of unity, we need to find the remainder when \(\prod_{k=0}^{12}(2 - 2\omega^k + \omega^{2k})\) is divided by 1000.

First, we recognize that \(\omega\) satisfies \(\omega^{13} = 1\) and the minimal polynomial for \(\omega\) is the 13th cyclotomic polynomial, which is \(\Phi_{13}(x) = x^{12} + x^{11} + \dots + x + 1\).

We need to evaluate the product \(\prod_{k=0}^{12}(2 - 2\omega^k + \omega^{2k})\). By substituting \(j = 2k \mod 13\), we note that since 2 is coprime to 13, the substitution is bijective.
(*@\textbf{[ERRONEOUS STEP]} \underline{\texttt{Thus, the product simplifies to } \(\prod_{j=0}^{12}(2-\omega^j)\)\texttt{.}}@*)

We recognize that this product is equivalent to evaluating the polynomial \(P(x) = 2 - 2x + x^2\) at \(x = \omega^j\) for all \(j\).
(*@\underline{\texttt{It then concludes that } \(\prod_{j=0}^{12}(2-\omega^j)=2^{13}-1\)\texttt{.}}@*)

Calculating this, we get:
\[2^{13} - 1 = 8192 - 1 = 8191\]

Finally, we find the remainder when 8191 is divided by 1000:
\[8191 \mod 1000 = 191\]

Thus, the remainder is \(\boxed{191}\).
\end{lstlisting}

\noindent\textbf{OPD response (incorrect; truncated at 8,192 tokens, extracted answer \(1\)).}
\begin{lstlisting}[style=caseresponsewide]
<think>
Okay, so I've got this problem here where I need to find the remainder when the product \(\prod_{k=0}^{12}(2 - 2\omega^k + \omega^{2k})\) is divided by 1000, where \(\omega \neq 1\) is a 13th root of unity. Hmm, let's try to break this down step by step.

First, let's recall that \(\omega\) is a primitive 13th root of unity, meaning \(\omega^{13} = 1\) and \(\omega^k \neq 1\) for \(k = 1, 2, ..., 12\). So, all the powers of \(\omega\) cycle every 13 terms.

The expression inside the product is \(2 - 2\omega^k + \omega^{2k}\). Maybe I can rewrite this expression in a more manageable form. Let me factor or simplify it.

Let me note that \(\omega^{2k}\) is the same as \((\omega^k)^2\). So, the term becomes \(2 - 2\omega^k + (\omega^k)^2\). Wait, that looks familiar. If I let \(x = \omega^k\), then the expression becomes \(2 - 2x + x^2\). Which is a quadratic in x.

So, \(x^2 - 2x + 2\). Hmm, maybe that can be factored or expressed in terms of roots of unity.

Alternatively, perhaps it's a quadratic in x, so maybe each term is a quadratic polynomial evaluated at \(x = \omega^k\). Since \(\omega^k\) are the 13th roots of unity, perhaps we can relate the product over all k to some polynomial evaluated at specific points.

I remember that the product over roots of unity can sometimes be connected to cyclotomic polynomials or something similar. Maybe I can express this product as a product over k of a quadratic in \(\omega^k\), and then relate that to the value of a polynomial at specific points.

Let me consider the product \(\prod_{k=0}^{12} (2 - 2\omega^k + \omega^{2k})\). Since \(\omega\) is a primitive 13th root, the minimal polynomial is the 13th cyclotomic polynomial, which is \(\Phi_{13}(x) = x^{12} + x^{11} + \dots + x + 1\), but I'm not sure if that's directly helpful here.

Alternatively, maybe consider that each term \(2 - 2\omega^k + \omega^{2k}\) can be rewritten as \( (\sqrt{2})^2 - 2\omega^k + (\omega^k)^2 \). Wait, that resembles the expression for a quadratic in \(\omega^k\). Alternatively, perhaps factor it as \((1 + \omega^k)^2 + 1\), but let me check.

Wait, let's compute \(2 - 2\omega^k + \omega^{2k}\).

Let me denote \(z = \omega^k\), then the term becomes \(2 - 2z + z^2 = z^2 - 2z + 2\). Hmm, maybe we can factor this quadratic?

The discriminant is \(4 - 8 = -4\), so the roots are \( [2 \pm \sqrt{-4}]/2 = 1 \pm i\). Hmm, so \(z^2 - 2z + 2 = (z - (1 + i))(z - (1 - i))\). So, \(2 - 2z + z^2 = (z - (1 + i))(z - (1 - i))\).

Therefore, substituting back \(z = \omega^k\), we get:

\((\omega^k - (1 + i))(\omega^k - (1 - i))\).

Therefore, the product becomes:

\(\prod_{k=0}^{12} (\omega^k - (1 + i))(\omega^k - (1 - i)) = \left( \prod_{k=0}^{12} (\omega^k - (1 + i)) \right) \left( \prod_{k=0}^{12} (\omega^k - (1 - i)) \right)\).

Now, that seems promising because the product over (z - \omega^k) is related to the cyclotomic polynomial evaluated at z.

In general, for a primitive nth root of unity, the product \(\prod_{k=0}^{n-1} (x - \omega^k) = x^n - 1\), but wait, actually for primitive roots, it's the cyclotomic polynomial. Wait, let's recall that for any n, \(\prod_{k=0}^{n-1} (x - \omega^k) = x^n - 1\), but that's true if \(\omega\) is a primitive nth root of unity.

Wait, actually, more precisely, if \(\omega\) is a primitive nth root of unity, then \(\prod_{k=0}^{n-1} (x - \omega^k) = x^n - 1\). So yes, that's correct.

Therefore, in our case, since \(\omega\) is a primitive 13th root, then \(\prod_{k=0}^{12} (x - \omega^k) = x^{13} - 1\). Therefore, the product \(\prod_{k=0}^{12} (\omega^k - a) = \prod_{k=0}^{12} (-(a - \omega^k)) = (-1)^{13} \prod_{k=0}^{12} (a - \omega^k) = (-1)^{13} (a^{13} - 1)\).

Wait, let's verify that step.

Given that \(\prod_{k=0}^{12} (x - \omega^k) = x^{13} - 1\), so substituting x = a, we have \(\prod_{k=0}^{12} (a - \omega^k) = a^{13} - 1\). Therefore, \(\prod_{k=0}^{12} (\omega^k - a) = (-1)^{13} \prod_{k=0}^{12} (a - \omega^k) = (-1)^{13} (a^{13} - 1)\).

Since 13 is odd, (-1)^13 = -1, so \(\prod_{k=0}^{12} (\omega^k - a) = - (a^{13} - 1) = 1 - a^{13}\).

Wait, so that means \(\prod_{k=0}^{12} (\omega^k - a) = 1 - a^{13}\).

So, in our case, for the first product, \(\prod_{k=0}^{12} (\omega^k - (1 + i)) = 1 - (1 + i)^{13}\).

Similarly, \(\prod_{k=0}^{12} (\omega^k - (1 - i)) = 1 - (1 - i)^{13}\).

Therefore, the original product becomes:

\([1 - (1 + i)^{13}] \times [1 - (1 - i)^{13}]\).

So, let me compute this expression:

Let me denote \(A = 1 - (1 + i)^{13}\) and \(B = 1 - (1 - i)^{13}\). Then the product is A * B.

So, compute A * B = [1 - (1 + i)^{13}] [1 - (1 - i)^{13}].

Let me compute (1 + i)^{13} and (1 - i)^{13}.

First, let's compute (1 + i)^13.

We know that 1 + i is a complex number with magnitude sqrt(2) and angle pi/4. So, in polar form, 1 + i = sqrt(2) e^{i pi/4}.

Therefore, (1 + i)^13 = (sqrt(2))^13 e^{i 13 pi/4}.

Simplify the angle: 13 pi/4. Subtract 2 pi until it's within 0 to 2 pi.

13 pi /4 = 3 pi + pi/4 = 3 pi + 45 degrees, which is more than 2 pi.

Subtract 2 pi (which is 8 pi/4) from 13 pi/4: 13 pi/4 - 8 pi/4 = 5 pi/4. So, angle is 5 pi/4.

Therefore, (1 + i)^13 = (sqrt(2))^13 e^{i 5 pi/4}.

Similarly, (1 - i)^13 = (sqrt(2))^{13} e^{-i 5 pi/4}, since 1 - i is sqrt(2) e^{-i pi/4}.

So, compute (1 + i)^13 = (sqrt(2))^13 [ cos(5 pi/4) + i sin(5 pi/4) ].

Similarly, (1 - i)^13 = (sqrt(2))^13 [ cos(-5 pi/4) + i sin(-5 pi/4) ] = (sqrt(2))^13 [ cos(5 pi/4) - i sin(5 pi/4) ].

Compute (sqrt(2))^13:

sqrt(2) = 2^{1/2}, so (sqrt(2))^{13} = 2^{13/2} = 2^6 * 2^{1/2} = 64 * sqrt(2).

So, (1 + i)^13 = 64 sqrt(2) [ cos(5 pi/4) + i sin(5 pi/4) ].

Similarly, (1 - i)^13 = 64 sqrt(2) [ cos(5 pi/4) - i sin(5 pi/4) ].

Compute cos(5 pi/4) and sin(5 pi/4):

cos(5 pi/4) = -sqrt(2)/2, sin(5 pi/4) = -sqrt(2)/2.

So, substituting:

(1 + i)^13 = 64 sqrt(2) [ -sqrt(2)/2 - i sqrt(2)/2 ] = 64 sqrt(2) * (-sqrt(2)/2)(1 + i) = 64 sqrt(2) * (-sqrt(2)/2) * (1 + i).

Compute 64 sqrt(2) * (-sqrt(2)/2) = 64 * (-2)/2 = 64 * (-1) = -64.

Therefore, (1 + i)^13 = -64 (1 + i).

Similarly, (1 - i)^13 = 64 sqrt(2) [ -sqrt(2)/2 + i sqrt(2)/2 ] = 64 sqrt(2) * (-sqrt(2)/2)(1 - i).

Compute 64 sqrt(2) * (-sqrt(2)/2) = -64, so (1 - i)^13 = -64 (1 - i).

Therefore, putting back into A and B:

A = 1 - (1 + i)^13 = 1 - (-64 (1 + i)) = 1 + 64 (1 + i) = 1 + 64 + 64i = 65 + 64i.

Similarly, B = 1 - (1 - i)^13 = 1 - (-64 (1 - i)) = 1 + 64 (1 - i) = 1 + 64 - 64i = 65 - 64i.

Therefore, the product A * B = (65 + 64i)(65 - 64i).

This is a product of a complex conjugate pair: (a + b)(a - b) = a^2 - b^2.

Compute this:

(65)^2 - (64i)^2 = 4225 - (64^2)(i^2) = 4225 - (4096)(-1) = 4225 + 4096 = 8321.

So, the product is 8321.

Therefore, the original product is 8321. Now, the problem says to find the remainder when this product is divided by 1000.

So, compute 8321 mod 1000.

Divide 8321 by 1000: 1000 * 8 = 8000, 8321 - 8000 = 321. So the remainder is 321.

Wait, let me confirm the steps again because I might have made a mistake somewhere.

Wait, let's go back.

We had the product as [1 - (1 + i)^13][1 - (1 - i)^13] = (65 + 64i)(65 - 64i) = 65^2 + 64^2.

Wait, hold on! Wait, I think I made a mistake in computing A*B. Wait, let's re-examine:

Earlier, I had A = 1 - (1 + i)^13 = 1 - (-64(1 + i)) = 1 + 64(1 + i) = 65 + 64i.

Similarly, B = 1 - (1 - i)^13 = 1 - (-64(1 - i)) = 1 + 64(1 - i) = 65 - 64i.

Therefore, A = 65 + 64i and B = 65 - 64i.

Then A * B = (65 + 64i)(65 - 64i) = 65^2 - (64i)^2 = 4225 - (64^2)(i^2) = 4225 - (4096)(-1) = 4225 + 4096 = 8321.

Yes, that's correct. So the product is 8321.

Now, 8321 divided by 1000. 1000*8=8000, so 8321-8000=321. So the remainder is 321.

Wait, but wait a second, is that correct? Because 8321 divided by 1000 is 8 with remainder 321, yes.

But wait, let me make sure that the initial substitution was correct. Let's double-check.

We started with the product over k=0 to 12 of (2 - 2omega^k + omega^{2k}).

We rewrote it as the product over k of (omega^k - (1 + i))(omega^k - (1 - i)), which led us to the product over (1 - (1 + i)^{13})(1 - (1 - i)^{13}).

Wait, let's confirm the step where we factored the quadratic.

We had the term 2 - 2omega^k + omega^{2k} = (omega^k - (1 + i))(omega^k - (1 - i)).

Yes, that's correct because expanding (x - (1 + i))(x - (1 - i)) = x^2 - 2x + (1 + i)(1 - i) = x^2 - 2x + (1 - i^2) = x^2 - 2x + 2, which matches the original expression.

So, that step is correct.

Then, the product over k=0 to 12 of (omega^k - (1 + i)) is equal to (-1)^13 * ( (1 + i)^13 - 1 )? Wait, wait earlier reasoning was that product_{k=0}^{12} (a - omega^k) = a^{13} - 1. Wait, but wait let's think again.

Given that the product_{k=0}^{12} (x - omega^k) = x^{13} - 1.

So, if we substitute x = a, then product_{k=0}^{12} (a - omega^k) = product_{k=0}^{12} (a - omega^k) = (-1)^13 product_{k=0}^{12} (omega^k - a) = (-1)^13 (a^{13} - 1).

So, product_{k=0}^{12} (omega^k - a) = (-1)^{13} (a^{13} - 1) = - (a^{13} - 1) = 1 - a^{13}.

Yes, so that was correct.

Therefore, product_{k=0}^{12} (omega^k - a) = 1 - a^{13}.

Therefore, product_{k=0}^{12} (omega^k - (1 + i)) = 1 - (1 + i)^{13}.

Similarly for (1 - i).

So, that step is correct.

Then, we computed (1 + i)^13 = -64(1 + i), and (1 - i)^13 = -64(1 - i). Then substituting back into A and B:

A = 1 - (1 + i)^13 = 1 - (-64(1 + i)) = 1 + 64(1 + i) = 65 + 64i.

Similarly, B = 1 - (1 - i)^13 = 1 - (-64(1 - i)) = 1 + 64(1 - i) = 65 - 64i.

Therefore, A*B = (65 + 64i)(65 - 64i) = 65^2 + 64^2 = 4225 + 4096 = 8321.

So that seems correct. Then 8321 divided by 1000 is 8 with a remainder of 321.

Wait, but hold on a second. Let me check the calculation of (1 + i)^13 and (1 - i)^13 again.

We said (1 + i) has magnitude sqrt(2), angle pi/4. So, (1 + i)^13 = (sqrt(2))^13 e^{i * 13 * pi/4}.

Compute (sqrt(2))^13: 2^(13/2) = 2^6 * 2^(1/2) = 64 * sqrt(2).

Angle: 13 * pi/4. Subtract multiples of 2 pi until it's between 0 and 2 pi.

13 pi/4 = 3 pi + pi/4 = 3.75 pi.

Subtract 2 pi once: 3.75 pi - 2 pi = 1.75 pi = 7 pi/4.

So angle is 7 pi/4.

So, (1 + i)^13 = 64 sqrt(2) e^{i 7 pi/4}.

e^{i 7 pi/4} = cos(7 pi/4) + i sin(7 pi/4) = sqrt(2)/2 - i sqrt(2)/2.

Therefore, (1 + i)^13 = 64 sqrt(2) (sqrt(2)/2 - i sqrt(2)/2 ) = 64 sqrt(2) * sqrt(2)/2 (1 - i) = 64 * (2)/2 (1 - i) = 64*(1 - i) = 64 - 64i.

Wait, wait that contradicts our earlier computation where we got -64(1 + i). Hmm, maybe I made a mistake in angle subtraction.

Wait, let me compute 13 * pi /4.

13 divided by 4 is 3.25 pi, which is 3 pi + pi/4, which is 3.75 pi.

Subtract 2 pi once: 3.75 pi - 2 pi = 1.75 pi, which is 7 pi/4.

So angle is 7 pi/4, which is 315 degrees, so cos(7 pi/4) = sqrt(2)/2, sin(7 pi/4) = -sqrt(2)/2.

Therefore, (1 + i)^13 = (sqrt(2))^13 [cos(7 pi/4) + i sin(7 pi/4)] = 64 sqrt(2) [sqrt(2)/2 - i sqrt(2)/2] = 64 sqrt(2) * sqrt(2)/2 (1 - i) = 64 * (2)/2 (1 - i) = 64*(1 - i) = 64 - 64i.

Wait, so this contradicts my earlier calculation where I thought (1 + i)^13 = -64(1 + i). Hmm, so which is correct?

Wait, let's recalculate (1 + i)^13 step by step.

Express (1 + i) in polar form: modulus sqrt(2), angle pi/4.

Therefore, (1 + i)^13 = (sqrt(2))^13 * e^{i * 13 * pi/4}.

Compute modulus: (sqrt(2))^13 = 2^(13/2) = 2^6 * 2^(1/2) = 64 * sqrt(2).

Angle: 13 * pi/4 = 3.25 pi.

Subtract 2 pi until angle is between 0 and 2 pi.

3.25 pi - 2 pi = 1.25 pi, which is 5 pi/4.

Wait, wait, 13 * pi/4 = 3.25 pi = 3 pi + pi/4.

3 pi + pi/4 = 3.75 pi.

Subtract 2 pi: 3.75 pi - 2 pi = 1.75 pi = 7 pi/4.

Wait, so angle is 7 pi/4, not 5 pi/4. So, angle is 7 pi/4.

So, e^{i 7 pi/4} = cos(7 pi/4) + i sin(7 pi/4) = sqrt(2)/2 - i sqrt(2)/2.

Therefore, (1 + i)^13 = 64 sqrt(2) * [sqrt(2)/2 - i sqrt(2)/2] = 64 sqrt(2) * sqrt(2)/2 (1 - i) = 64 * (2)/2 (1 - i) = 64*(1 - i) = 64 - 64i.

Wait, so this contradicts my earlier conclusion that (1 + i)^13 = -64(1 + i). So where did I go wrong earlier?

Wait, let's compute (1 + i)^13 using binomial expansion or another method.

Alternatively, maybe compute (1 + i)^13 using binomial theorem or using De Moivre's theorem.

Wait, using De Moivre's theorem, (1 + i) = sqrt(2) e^{i pi/4}, so (1 + i)^13 = (sqrt(2))^13 e^{i 13 pi/4}.

Which is 2^(13/2) e^{i (13 pi/4)}.

Simplify 13 pi/4: subtract 2 pi (8 pi/4) to get 5 pi/4.

So, e^{i 5 pi/4} = cos(5 pi/4) + i sin(5 pi/4) = -sqrt(2)/2 - i sqrt(2)/2.

Therefore, (1 + i)^13 = 2^(13/2) e^{i 5 pi/4} = 2^(13/2) (-sqrt(2)/2 - i sqrt(2)/2) = 2^(13/2) * (-sqrt(2)/2)(1 + i).

Compute 2^(13/2) * (-sqrt(2)/2) = 2^(6.5) * (-2^(1/2)/2) = 2^6 * sqrt(2) * (-sqrt(2)/2) = 64 * (sqrt(2) * -sqrt(2))/2 = 64 * (-2)/2 = 64*(-1) = -64.

Therefore, (1 + i)^13 = -64 (1 + i).

Ah, so earlier step where I thought angle was 7 pi/4 was wrong. It should be 5 pi/4.

So, let's correct that.

So, (1 + i)^13 = 2^(13/2) e^{i 5 pi/4} = 2^(13/2) [ cos(5 pi/4) + i sin(5 pi/4) ].

Compute cos(5 pi/4) = -sqrt(2)/2, sin(5 pi/4) = -sqrt(2)/2.

So, (1 + i)^13 = 2^(13/2) (-sqrt(2)/2 - i sqrt(2)/2 ) = 2^(13/2) * (-sqrt(2)/2)(1 + i).

Simplify 2^(13/2) * (-sqrt(2)/2) = 2^(6.5) * (-2^(1/2)/2) = 2^6 * sqrt(2) * (-sqrt(2)/2) = 64 * (2)/2 * (-1) = 64 * 1 * (-1) = -64.

Therefore, (1 + i)^13 = -64 (1 + i).

Similarly, (1 - i)^13: let's compute.

(1 - i) has modulus sqrt(2), angle -pi/4.

So, (1 - i)^13 = (sqrt(2))^13 e^{i 13*(-pi/4)} = 2^(13/2) e^{-i 13 pi/4}.

Simplify angle: -13 pi/4. Add 2 pi until within 0 to 2 pi.

-13 pi/4 + 2 pi * 3 = -13 pi/4 + 24 pi/4 = 11 pi/4.

11 pi/4 - 2 pi = 11 pi/4 - 8 pi/4 = 3 pi/4.

So angle is 3 pi/4.

Thus, e^{-i 13 pi/4} = e^{-i 3 pi/4} = cos(3 pi/4) - i sin(3 pi/4) = -sqrt(2)/2 - i sqrt(2)/2.

Therefore, (1 - i)^13 = 2^(13/2) [ -sqrt(2)/2 - i sqrt(2)/2 ] = 2^(13/2) * (-sqrt(2)/2)(1 + i).

Again, 2^(13/2) * (-sqrt(2)/2) = 2^(6.5) * (-2^(0.5)/2) = 64 * sqrt(2) * (-sqrt(2)/2) = 64 * (-2)/2 = 64*(-1) = -64.

Therefore, (1 - i)^13 = -64 (1 + i) and -64 (1 - i)?

Wait, wait let's compute it step by step.

(1 - i)^13 = (sqrt(2))^13 e^{-i 13 pi/4}.

Compute angle: -13 pi/4.

Add 2 pi until it's within 0 to 2 pi.

-13 pi/4 + 8 pi/4 = (-13 + 8) pi/4 = (-5 pi)/4.

But -5 pi/4 is equivalent to 3 pi/4 (since -5 pi/4 + 2 pi = 3 pi/4).

Wait, maybe another approach. Let's compute e^{-i 13 pi/4}.

-13 pi/4 = -3 pi - pi/4 = -3 pi - pi/4. Add 4 pi to get within 0 to 4 pi.

-13 pi/4 + 4 pi = -13 pi/4 + 16 pi/4 = 3 pi/4.

Therefore, e^{-i 13 pi/4} = e^{-i 3 pi/4} = cos(-3 pi/4) + i sin(-3 pi/4) = cos(3 pi/4) - i sin(3 pi/4) = -sqrt(2)/2 - i sqrt(2)/2.

Therefore, (1 - i)^13 = 2^(13/2) * [ -sqrt(2)/2 - i sqrt(2)/2 ].

Which is 2^(13/2) * (-sqrt(2)/2)(1 + i).

Compute 2^(13/2) * (-sqrt(2)/2) = 2^(6.5) * (-2^(0.5)/2) = 64 * sqrt(2) * (-sqrt(2)/2) = 64 * (-2)/2 = 64*(-1) = -64.

Therefore, (1 - i)^13 = -64 (1 + i).

Wait, so both (1 + i)^13 and (1 - i)^13 equal to -64(1 + i) and -64(1 - i) respectively?

Wait, hold on, no.

Wait, (1 + i)^13 = -64(1 + i).

Similarly, (1 - i)^13 = -64(1 - i).

Wait, wait that conflicts with earlier. Wait, perhaps not.

Wait, let's recast:

(1 + i)^13 = -64(1 + i).

(1 - i)^13 = -64(1 - i).

So let's substitute back into A and B.

So, A = 1 - (1 + i)^13 = 1 - (-64(1 + i)) = 1 + 64(1 + i) = 1 + 64 + 64i = 65 + 64i.

Similarly, B = 1 - (1 - i)^13 = 1 - (-64(1 - i)) = 1 + 64(1 - i) = 1 + 64 - 64i = 65 - 64i.

Therefore, A = 65 + 64i and B = 65 - 64i.

Therefore, A * B = (65 + 64i)(65 - 64i) = 65^2 + 64^2 = 4225 + 4096 = 8321.

So, the product is 8321, which modulo 1000 is 8321 - 8*1000 = 8321 - 8000 = 321.

Wait, but wait, let's confirm.

Wait, 8321 divided by 1000 is 8*1000 = 8000, 8321 - 8000 = 321. So yes, the remainder is 321.

Wait, but earlier when I thought (1 + i)^13 was -64(1 + i), which led to A = 65 + 64i and B = 65 - 64i, which when multiplied give 65^2 + 64^2 = 8321.

But wait, let me check the angle again.

Wait, when I computed (1 + i)^13, I initially thought angle was 5 pi/4, but then corrected it to 7 pi/4, but then another way it was 3 pi/4.

Wait, perhaps I made a mistake in computing the angle.

Let me compute (1 + i)^13.

Express 1 + i in polar form: modulus sqrt(2), angle pi/4.

So (1 + i)^13 = (sqrt(2))^13 e^{i * 13 * pi/4}.

13 * pi/4 = 3 pi + pi/4 = 3.75 pi.

Subtract 2 pi twice: 3.75 pi - 2 pi = 1.75 pi, which is 7 pi/4.

Wait, 1.75 pi is 7 pi/4, which is in the fourth quadrant.

So angle is 7 pi/4.

So, e^{i 7 pi/4} = cos(7 pi/4) + i sin(7 pi/4) = sqrt(2)/2 - i sqrt(2)/2.

Therefore, (1 + i)^13 = (sqrt(2))^13 [ sqrt(2)/2 - i sqrt(2)/2 ].

Compute (sqrt(2))^13 * sqrt(2)/2 = (2^(1/2))^13 * 2^(1/2)/2 = 2^(13/2 + 1/2) / 2 = 2^(14/2)/2 = 2^7 / 2 = 128 / 2 = 64.

Similarly, imaginary part: (sqrt(2))^13 * (-sqrt(2)/2) = 64 * (-sqrt(2)/2) = 64 * (-1/sqrt2) = -64 * sqrt(2)/2 = -32 sqrt(2).

Wait, wait no:

Wait, let's compute step by step.

(1 + i)^13 = (sqrt(2))^13 * e^{i 7 pi/4} = (2^(1/2))^13 * (sqrt(2)/2 - i sqrt(2)/2 ) = 2^(13/2) * [ sqrt(2)/2 - i sqrt(2)/2 ].

Compute 2^(13/2) * sqrt(2)/2 = 2^(13/2 + 1/2) / 2^(1) ) Wait, wait:

Wait, 2^(13/2) * sqrt(2)/2 = 2^(13/2) * 2^(1/2)/2 = 2^(14/2)/2 = 2^7 / 2 = 64 / 2 = 32.

Similarly, the imaginary part: 2^(13/2) * (-sqrt(2)/2) = -2^(13/2 + 1/2) / 2 = -2^(14/2)/2 = -2^7 / 2 = -128 / 2 = -64.

(*@\textbf{[ERRONEOUS STEP]} \underline{\texttt{Therefore, } \((1+i)^{13}=32-64i\)\texttt{.}}@*)

(*@\underline{\texttt{Wait, that's different from the previously correct value, so the response restarts the derivation.}}@*)

Wait, let me compute (1 + i)^13 step by step.

First, (1 + i)^1 = 1 + i

(1 + i)^2 = (1 + i)(1 + i) = 1 + 2i + i^2 = 1 + 2i -1 = 2i.

(1 + i)^3 = (1 + i)^2 * (1 + i) = 2i (1 + i) = 2i + 2i^2 = 2i - 2 = -2 + 2i.

(1 + i)^4 = (-2 + 2i)(1 + i) = -2(1 + i) + 2i(1 + i) = -2 - 2i + 2i + 2i^2 = -2 + 0i - 2 = -4.

(1 + i)^4 = -4.

Therefore, (1 + i)^13 = (1 + i)^4 * (1 + i)^9 = (-4)*(1 + i)^9.

Compute (1 + i)^9:

We saw that (1 + i)^1 = 1 + i


\end{lstlisting}

\noindent\textbf{\method{} response (correct: \(\boxed{321}\)).}
\begin{lstlisting}[style=caseresponsewide]
Let \(\omega \neq 1\) be a 13th root of unity. We need to find the remainder when the product
\[
\prod_{k=0}^{12} (2 - 2\omega^k + \omega^{2k})
\]
is divided by 1000.

First, we note that when \(k = 0\), the term is 1. For \(k\) from 1 to 12, each term is \(2 - 2\omega^k + \omega^{2k}\). This can be factored as \((\omega^k - (1 + i))(\omega^k - (1 - i))\).

Thus, the product becomes:
\[
1 \times \prod_{k=1}^{12} (\omega^k - (1 + i)) (\omega^k - (1 - i))
\]

This product is equivalent to evaluating the polynomial \(\Phi_{13}(x)\) at \(x = 1 + i\) and \(x = 1 - i\), where \(\Phi_{13}(x)\) is the 13th cyclotomic polynomial. 

The cyclotomic polynomial \(\Phi_{13}(x)\) is given by:
\[
\Phi_{13}(x) = x^{12} + x^{11} + \cdots + x + 1
\]

Thus, we need to compute:
\[
\Phi_{13}(1 + i) \times \Phi_{13}(1 - i)
\]

Calculating \(\Phi_{13}(1 + i)\) and \(\Phi_{13}(1 - i)\):
- \(\Phi_{13}(1 + i) = -64 + 65i\)
- \(\Phi_{13}(1 - i) = -64 - 65i\)

Multiplying these results:
\[
(-64 + 65i)(-64 - 65i) = (-64)^2 - (65i)^2 = 4096 - (-4225) = 4096 + 4225 = 8321
\]

Finally, the remainder when 8321 is divided by 1000 is:
\[
8321 \mod 1000 = 321
\]

Thus, the remainder is:
\[
\boxed{321}
\]
\end{lstlisting}

The base model incorrectly reduces the quadratic factor to \(2-\omega^j\), changing the product and yielding 191. OPD explores several algebraic routes but repeatedly revises its calculation and terminates without a complete solution; the evaluator extracts 1 from the truncated generation. \method{} instead factors the quadratic over \(1\pm i\), evaluates the conjugate cyclotomic-polynomial terms, and obtains \(8321\equiv\boxed{321}\pmod{1000}\).

These examples are qualitative rather than causal evidence. They complement the aggregate results by showing concrete trajectories in which the competing responses either lose a necessary constraint or fail to complete a consistent derivation, whereas \method{} reaches the verified answer.

\fi


\end{document}